\DeclareMathAlphabet\EuRoman{U}{eur}{m}{n}
\SetMathAlphabet\EuRoman{bold}{U}{eur}{b}{n}
\crefname{lemma}{Lemma}{Lemmas}
\crefname{corollary}{Corollary}{Corollaries}
\crefname{theorem}{Theorem}{Theorems}
\let\reftagform@=\tagform@
\def\tagform@#1{\maketag@@@{\ignorespaces\textcolor{gray}{(#1)}\unskip\@@italiccorr}}
\renewcommand{\eqref}[1]{\textup{\reftagform@{\ref{#1}}}}
\declaretheorem[style=plain,numberwithin=section,name=Theorem]{theorem}
\declaretheorem[style=plain,sibling=theorem,name=Lemma]{lemma}
\declaretheorem[style=plain,sibling=theorem,name=Corollary]{corollary}
\declaretheorem[style=definition,sibling=theorem,name=Definition]{definition}
\declaretheorem[style=remark,qed=$\triangleleft$,sibling=theorem,name=Remark]{remark}
\numberwithin{theorem}{section}
\def\[#1\]{\begin{align}#1\end{align}}
\def\*[#1\]{\begin{align*}#1\end{align*}}
\newcommand{\defas}{\vcentcolon=}  %
\newcommand{\minf}[1]{I(#1)}
\newcommand{\entr}[1]{H(#1)}
\newcommand{\cPr}[2]{\Pr^{#1}[#2]}
\newcommand{\parspace}{\mathcal{W}}
\newcommand{\Dist}{\mathcal D}
\newcommand{\dataspace}{\mathcal Z}
\newcommand\optparen[1]{\ifthenelse{\equal{#1}{}}{}{(#1)}}
\newcommand{\RiskChar}{R}
\newcommand{\Risk}[2]{\RiskChar_{#1}\optparen{#2}}
\newcommand{\EmpRisk}[2]{\hat \RiskChar_{#1}\optparen{#2}}
\newcommand{\SurEmpRisk}[2]{\tilde \RiskChar_{#1}\optparen{#2}}
\newcommand{\set}[2][]{#1\{#2 #1\}}
\newcommand{\brackets}[2][]{#1[#2 #1]}
\newcommand{\dist}{\ \sim\ }
\newcommand{\unifdist}{\text{Unif}}
\newcommand{\given}{\mid}
\newcommand{\Naturals}{\mathbb{N}}
\newcommand{\Reals}{\mathbb{R}}
\newcommand{\as}{\textrm{a.s.}}
\newcommand{\grad}{\nabla}
\newcommand{\dee}{\mathrm{d}}
\DeclareMathOperator*{\newlim}{\mathrm{lim}\vphantom{\mathrm{infsup}}}
\DeclareMathOperator*{\newinf}{\mathrm{inf}\vphantom{\mathrm{infsup}}}
\DeclareMathOperator*{\newsup}{\mathrm{sup}\vphantom{\mathrm{infsup}}}
\renewcommand{\lim}{\newlim}
\renewcommand{\inf}{\newinf}
\renewcommand{\sup}{\newsup}
\newcommand{\ProbMeasures}[1]{\mathcal{M}_1(#1)}
\renewcommand{\Pr}{\mathbb{P}}
\def\EE{\mathbb{E}}
\newcommand{\defn}[1]{\textit{#1}}
\newcommand{\equaldist}{\overset{d}{=}}
\newcommand{\card}[1]{\lvert #1 \rvert}
\newcommand{\KLname}{\mathrm{KL}}
\newcommand{\KL}[2]{\KLname(#1 \,\|\,#2)}
\newcommand{\Normal}{\mathcal N}
\newcommand{\loss}{\ell}
\newcommand{\trainset}{S}
\newcommand{\id}[1]{\mathbb{I}_{#1}}
\newcommand{\Alg}{\mathcal{A}}
\newcommand{\unif}[1]{\text{Unif}(#1)}
\newcommand{\lcrx}[4][{-1}]{
	\IfEq{#1}{-1}{\left #2 {{{{#3}}}} \right #4}{
   	\IfEq{#1}{0}{#2 {{{{#3}}}} #4}{
	\IfEq{#1}{1}{\bigl #2 {{{{#3}}}} \bigr #4}{
	\IfEq{#1}{2}{\Bigl #2 {{{{#3}}}} \Bigr #4}{
	\IfEq{#1}{3}{\biggl #2 {{{{#3}}}} \biggr #4}{
	\IfEq{#1}{4}{\Biggl #2 {{{{#3}}}} \Biggr #4}{
    \GenericWarning{"4th argument to lcrx must be -1, 0, 1, 2, 3, or 4"}
    }}}}}}}
\newcommand{\rbra}[2][{-1}]{\lcrx[#1] ( {#2} ) }
\newcommand{\sbra}[2][{-1}]{\lcrx[#1] [ {#2} ] }
\newcommand{\rnderiv}[2]{\frac{\text{d} #1}{\text{d} #2}}
\newcommand{\cEE}[1]{\EE^{#1}}
\newcommand{\ww}{W}
\newcommand{\bgrad}[2]{\nabla \SurEmpRisk{#1}{#2}}
\newcommand{\ssgrad}{\nabla \sloss}
\newcommand{\conditional}[1]{#1_{t|}}
\newcommand\eqdist{\mathrel{\overset{\smash{\makebox[0pt]{\mbox{\normalfont\tiny d}}}}{=}}}
\newcommand{\indep}{\mathrel{\perp\mkern-9mu\perp}}
\newcommand{\sloss}{\tilde{\ell}}
\newcommand{\dminf}[2]{I^{#1} (#2)}
\newcommand{\indic}[1]{ \mathds{1} \{ #1\}}
\newcommand{\hypt}[1]{\mathcal{H}_{#1}}
\newcommand{\supersam}[1]{\tilde{Z}^{(#1)}}
\newcommand{\incoh}{\zeta}
\newcommand{\range}[1]{ [#1] }
\newcommand{\ssindex}[1]{U^{(#1)}}
\newcommand{\EGE}{\ensuremath{\mathrm{EGE}_{\Dist}(\Alg)}}
\newcommand{\IWS}{\ensuremath{\mathrm{IOMI}_{\Dist}(\Alg)}\xspace}
\newcommand{\SZB}{\ensuremath{\mathrm{CMI}^{\,k}_{\Dist}(\Alg)}\xspace}
\title{Sharpened Generalization Bounds based on Conditional Mutual Information and an Application to Noisy, Iterative Algorithms}
\author{
	Mahdi Haghifam\\
	University of Toronto, \\
	Vector Institute \\
	\And
	Jeffrey Negrea\\
	University of Toronto,\\
	Vector Institute \\
	\And
	Ashish Khisti \\
	University of Toronto \\
	\And
	Daniel M. Roy \\
	University of Toronto,\\
	Vector Institute\\
	\And
	Gintare Karolina Dziugaite \\
	Element AI
}
\begin{document}

\maketitle

\begin{abstract}

The information-theoretic framework of \citeauthor{RussoZou16} (\citeyear{RussoZou16}) 
and \citeauthor{XuRaginsky2017} (\citeyear{XuRaginsky2017})
provides bounds on the generalization error of a learning algorithm in terms of the mutual information between the algorithm's output and the 
training sample. 
In this work, we study the proposal by \citeauthor{steinke2020reasoning} (\citeyear{steinke2020reasoning})
to reason about the generalization error of a learning algorithm 
by introducing a super sample that contains the training sample as a random subset
 and computing mutual information conditional on the super sample.
We first show that these new bounds based on the conditional mutual information are tighter than
those based on the unconditional mutual information.
We then introduce yet tighter bounds, building on the ``individual sample'' idea of 
\citeauthor{BuZouVeeravalli2019} (\citeyear{BuZouVeeravalli2019})
and the ``data dependent'' ideas of \citeauthor{negrea2019information} (\citeyear{negrea2019information}), 
using disintegrated mutual information.
Finally, we apply these bounds to the study of the Langevin dynamics algorithm, 
showing that conditioning on the super sample allows us to exploit information in the optimization trajectory
to obtain tighter bounds based on hypothesis tests.

\end{abstract}

\section{Introduction}

Let $\Dist$ be an unknown distribution on a space $\dataspace$,
and let $\parspace$ be a set of parameters that index a set of predictors, $\loss: \dataspace \times \parspace \to [0,1]$ be a bounded loss function. %
Consider a (randomized) learning algorithm $\Alg$ that selects an element $W$ in $\parspace$,
based on an IID sample
$
\smash{S = \left(Z_1,\hdots,Z_n\right)\sim \Dist^{\otimes n}}.
$
For $w \in \parspace$, let $\Risk{\Dist}{w} = \EE \loss(Z,w)$ denote the risk of predictor $w$,
and  
$\smash{\EmpRisk{S}{w} = \frac{1}{n} \sum_{i=1}^{m} \loss(Z_i,w)}$ denote the empirical risk.
Our interest in this paper is  the \defn{(expected) generalization error} of $\Alg$ with respect to $\Dist$,
\*[
\EGE = \EE \sbra[0]{\Risk{\Dist}{W} - \EmpRisk{S}{W}}.
\]

In this work, we study bounds on generalization error in terms of information-theoretic measures of dependence between the data and the output of the learning algorithm.
This approach was initiated by \citet{RussoZou15,RussoZou16} and has since been extended \citep{raginsky2016information,XuRaginsky2017,asadi2019chaining2,IbrahimEspositoGastpar19,asadi2018chaining,BuZouVeeravalli2019}.
The basic result in this line of work is that the generalization error can be bounded in terms of 
the mutual information $\minf{W;S}$
between the data and the learned parameter, a quantity that has been called the \emph{information usage} or 
\emph{input--output mutual information of $\Alg$ with respect to $\Dist$}, which we denote by $\IWS$.
The following result is due to \citet{RussoZou16} and \citet{XuRaginsky2017}.
\begin{theorem}
\label{thm:RZ_XR_bounds}
$\displaystyle
	\EGE \leq \sqrt{\frac{\IWS}{2n}}.
$
\end{theorem}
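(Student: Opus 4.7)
The plan is to prove the bound via the Donsker--Varadhan variational representation of KL divergence combined with Hoeffding's lemma, which is the now-classical approach of \citet{XuRaginsky2017}. The two conceptual ingredients are (i) a decoupling estimate that upgrades an a.s.\ sub-Gaussian concentration statement into a bound on the gap between correlated and independent expectations, controlled by mutual information, and (ii) verification of the sub-Gaussianity hypothesis for the empirical risk viewed as a function of the sample, with a fixed parameter.

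First, I would introduce $W'$, an independent copy of $W$ with the same marginal, taken independent of $S$. Since $S = (Z_1,\dots,Z_n)$ is i.i.d.\ from $\Dist$ and $W'$ is independent of $S$, Fubini yields $\EE[\EmpRisk{S}{W'}] = \EE[\Risk{\Dist}{W'}] = \EE[\Risk{\Dist}{W}]$, so that
\[
\EGE \;=\; \EE\bigl[\EmpRisk{S}{W'} - \EmpRisk{S}{W}\bigr].
\]
This reformulation sets up a comparison between the joint law $P_{S,W}$ and the product of marginals $P_S \otimes P_W$ applied to the test function $f(s,w) = \EmpRisk{s}{w}$.

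Next, I would invoke the standard decoupling lemma: if $f(S,W)$ is $\sigma$-sub-Gaussian under $P_S \otimes P_W$, then
\[
\bigl\lvert \EE_{P_{S,W}}[f(S,W)] - \EE_{P_S \otimes P_W}[f(S,W)] \bigr\rvert \;\le\; \sqrt{2 \sigma^2\, \minf{W;S}}.
\]
The derivation uses Donsker--Varadhan: for any $\lambda \in \Reals$, $\lambda \EE_{P_{S,W}}[f] \le \KL{P_{S,W}}{P_S \otimes P_W} + \log \EE_{P_S \otimes P_W}[e^{\lambda f}]$, after which the sub-Gaussian bound on the log-MGF and optimization over $\lambda$ give the claim. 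To verify sub-Gaussianity, I would fix $w$ and note that $\EmpRisk{S}{w} - \Risk{\Dist}{w}$ is an average of $n$ i.i.d.\ centered variables bounded in $[-1,1]$; by Hoeffding's lemma each is $\tfrac{1}{2}$-sub-Gaussian, and the average is $\tfrac{1}{2\sqrt n}$-sub-Gaussian. Because this holds for every $w \in \parspace$, it holds under $P_S$ after drawing $w \sim P_W$ independently, giving $\sigma^2 = \tfrac{1}{4n}$ under $P_S \otimes P_W$ for the centered test function.

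Combining the two steps with $\sigma^2 = 1/(4n)$ produces $\EGE \le \sqrt{\minf{W;S}/(2n)}$, as required. The only subtlety I anticipate is being careful that sub-Gaussianity is needed under the \emph{product} measure rather than the joint, but this is free here since the pointwise (for each $w$) bound transfers immediately by Tonelli. A minor bookkeeping matter is signing: the absolute value in the decoupling lemma dominates either direction of the gap, so no separate argument is needed to handle the sign of $\EGE$.
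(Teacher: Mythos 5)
Your proof is correct and is essentially the canonical Donsker--Varadhan-plus-Hoeffding argument of Xu and Raginsky that the paper cites for this theorem: the paper states \cref{thm:RZ_XR_bounds} without proof, and the identical machinery (variational formula for KL, Hoeffding's lemma on the log-MGF, optimization over $\lambda$) is exactly what the paper deploys in its own proofs of \cref{thm:gen-bound-rndsub,thm:gen-bu-style,thm:gen-kl-m1}. One cosmetic point: the $\tfrac{1}{2}$-sub-Gaussianity of each summand comes from $\loss(Z_i,w)$ taking values in an interval of length $1$ (namely $[0,1]$), not from the centered variable being ``bounded in $[-1,1]$'', which on its own would only yield parameter $1$; the constant $\sigma^2 = 1/(4n)$ you carry through is nevertheless the correct one, so the final bound $\sqrt{\IWS/(2n)}$ stands.
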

Theorem \ref{thm:RZ_XR_bounds} formalizes the intuition that a learning algorithm without heavy dependence on the training set will generalize well. 
This result has been extended in many directions:
\citet{raginsky2016information} connect variants of \IWS to different notions of stability. 
\citet{asadi2018chaining} establish refined bounds using chaining techniques for
subgaussian processes. 
\citet{BuZouVeeravalli2019} obtain a tighter bound by replacing \IWS with the mutual information between $W$ and a single training data point.
\citet{negrea2019information} propose variants that allow for data-dependent estimates.    
See also \citep{JiaoHanWeissman17, LopezJog2018, bassily2018learners,asadi2019chaining2,IbrahimEspositoGastpar19}.

Our focus in this paper is on a new class of information-theoretic bounds on generalization error,
proposed by \citet{steinke2020reasoning}.
Fix $k \ge 2$, let $\smash{[k]=\{1,\dots,k\}}$,
let $\smash{\ssindex{k}=\left(U_1,\hdots,U_n\right)\dist\unifdist([k]^n)}$,
and let 
$
   \smash{\supersam{k} \dist \Dist^{\otimes(k\times n)}}
$
be a $k\times n$ array of IID random elements in $\dataspace$, independent from $\smash{\ssindex{k}}$.
Let
$\smash{
S = \rbra{Z_{U_1,1},\hdots,Z_{U_n,n}}
}$
and let $W$ be a random element in $\parspace$ 
such that
conditional on $S$, $\smash{\ssindex{k}}$,
and $\smash{\supersam{k}}$, $W$ has distribution $\Alg(S)$. 
It follows that, conditional on $S$, $W$ is independent from $\smash{\ssindex{k}}$ and $\smash{\supersam{k}}$.
By construction, the data set $S$ is hidden inside the super sample; the indices $\smash{\ssindex{k}}$ 
specify where. 
\citet{steinke2020reasoning} use these additional structures to define:
\begin{definition}
\label{def:cmi}
The \defn{conditional mutual information of $\Alg$ w.r.t. $\Dist$} is %
$
\smash{\SZB = \minf{W;\ssindex{k}\vert \supersam{k}}.} %
$ 
\end{definition}
Intuitively, $\smash{\SZB}$ captures how well we can recognize which samples from the given super-sample $\smash{\supersam{k}}$ were in the training set, given the learned parameters.
This intuition and the connection of \SZB with the membership attack \citep{shokri2017membership} can be formalized using Fano's inequality, showing that \SZB can be used to lower bound the error of any estimator of $\smash{\ssindex{k}}$ given $W$ and $\smash{\supersam{k}}$. (See \cref{app:fano}.)
\citet{steinke2020reasoning} connect \SZB with well-known notions in learning theory such as distributional stability, differential privacy, and VC dimension,
and establish the following bound \citep[][Thm.~5.1]{steinke2020reasoning} in the case $k=2$, the extension to $k\geq 2$ being straightforward:

\begin{theorem}
\label{thm:cmi_main}
$
	\EGE \leq \sqrt{ \frac{2\SZB}{n}} .
$	
\end{theorem}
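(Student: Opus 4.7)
The plan is to adapt the Donsker--Varadhan / change-of-measure proof of \cref{thm:RZ_XR_bounds} to the conditional setting, using the super-sample to rewrite the generalization error as the expectation of a bounded, zero-symmetric function of $(W,\ssindex{k})$.

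First, I would introduce a function that encodes the gap between the risk and the empirical risk using only data from the super-sample. For fixed $w$, $\tilde z\in\dataspace^{k\times n}$, and $u\in[k]^n$, set
\*[
g(w,\tilde z,u) \defas \frac{1}{n}\sum_{i=1}^{n}\!\Biggl[\frac{1}{k-1}\sum_{j\neq u_i}\loss(\tilde z_{j,i},w) - \loss(\tilde z_{u_i,i},w)\Biggr].
\]
Under the joint law of $(W,\supersam{k},\ssindex{k})$, the second term averages to $\EE[\EmpRisk{S}{W}]$. For the first, note that for each $i$ and each $j\neq U_i$, the entry $Z_{j,i}$ is an IID draw from $\Dist$ that is not used in forming $S$, hence is independent of $W$ given $\ssindex{k}$; therefore its conditional expectation of $\loss(\cdot,W)$ equals $\Risk{\Dist}{W}$. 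Summing gives $\EE[g(W,\supersam{k},\ssindex{k})]=\EGE$.

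Next, I would compare this expectation under the joint distribution of $(W,\ssindex{k})\mid\supersam{k}$ to its expectation under the ``product'' distribution $\Pr_{W\mid\supersam{k}}\otimes\Pr_{\ssindex{k}\mid\supersam{k}}$. Two observations: (i) by the symmetry of $g$ in the role of $u_i$ and the fact that $\ssindex{k}$ is uniform on $[k]^n$ and (under the product law) independent of $W$ given $\supersam{k}$, the conditional expectation of $g$ under the product is zero for every realization $\supersam{k}=\tilde z$; (ii) under the same product, for each $i$ the term $h_i(w,\tilde z,u_i)\defas \frac{1}{k-1}\sum_{j\neq u_i}\loss(\tilde z_{j,i},w)-\loss(\tilde z_{u_i,i},w)$ is a bounded, mean-zero function of $u_i$ taking values in $[-1,1]$, so by Hoeffding's lemma it is $1$-subgaussian; because the $U_i$ remain independent given $(W,\supersam{k})$ under the product, $g=\frac{1}{n}\sum_i h_i$ is $\tfrac{1}{n}$-subgaussian.

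Now I would invoke the standard Donsker--Varadhan inequality: for a $\sigma^2$-subgaussian random variable $X$ under a measure $Q$ and any $P\ll Q$, $\EE_P X-\EE_Q X\leq\sqrt{2\sigma^2\,\KL{P}{Q}}$. Applied conditionally on $\supersam{k}=\tilde z$ with $X=g(W,\tilde z,\ssindex{k})$ and $\sigma^2=1/n$, this gives
\*[
\EE\bigl[g(W,\supersam{k},\ssindex{k})\mid\supersam{k}=\tilde z\bigr]\leq\sqrt{\tfrac{2}{n}\,\minf{W;\ssindex{k}\mid \supersam{k}=\tilde z}}.
\]
Finally I would take an expectation over $\supersam{k}$ and apply Jensen's inequality to move the expectation inside the square root, recognizing the result as $\sqrt{2\SZB/n}$ by the chain rule for conditional mutual information.

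The main obstacle I anticipate is not any single algebraic step but verifying carefully the three independence facts that make the subgaussian bound applicable: that marginalizing and conditioning on $\supersam{k}$ preserves the uniform product structure of $\ssindex{k}$, that under the product measure the $U_i$'s remain independent given $W$, and that the ``unused'' entries $Z_{j,i}$ with $j\neq U_i$ remain independent of $W$ with marginal $\Dist$, which is what makes the identification $\EE[g]=\EGE$ go through. Once these bookkeeping points are settled, the bound follows from the same recipe as \cref{thm:RZ_XR_bounds}, with the $\tfrac{1}{n}$ subgaussian parameter producing the factor $\sqrt{2/n}$ rather than $\sqrt{1/(2n)}$.
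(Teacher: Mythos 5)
Your proposal is correct and follows essentially the same route as the paper: a conditional Donsker--Varadhan change of measure with a supersample-based test function (identical, up to sign, to the $\rho^{(k)}$-weighted function used in the proof of \cref{thm:improved_const}), Hoeffding's lemma giving the $1/n$-subgaussian bound under the product measure given $\supersam{k}$, and Jensen's inequality over $\supersam{k}$ --- the same recipe the paper uses for \cref{thm:gen-bound-rndsub}, whose $m=n$, $k=2$ case it notes is equivalent to \cref{thm:cmi_main}. One trivial wording fix: the final identification is just the definition of $\SZB$ as the expectation of the disintegrated mutual information (combined with Jensen), not the chain rule.
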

This paper improves our understanding of the framework introduced by \citet{steinke2020reasoning},
identifies tighter bounds, and applies these techniques to the analysis of a real algorithm.
In \cref{sec:mi-gen}, we present several formal connections between the two aforementioned information-theoretic approaches for studying generalization. 
Our first result bridges $\smash{\IWS}$ and $\smash{\SZB}$, showing that for any learning algorithm, any data distribution, and any $k$, $\smash{\SZB}$ is less that $\smash{\IWS}$. 
We also show that $\smash{\SZB}$ converges to $\smash{\IWS}$ as $k\to \infty$ when $\card{\parspace}$ is finite. 
In \cref{sec:gen-bounds}, we establish two novel bounds on generalization error using the random index and super sample structure of \citeauthor{steinke2020reasoning}, 
and show that both our bounds are tighter than those based on \SZB. 
Finally, in  \cref{sec:ld-gen-bound}, we show how to construct generalization error bounds for noisy, iterative algorithms using the generalization bound proposed in \cref{sec:gen-bounds}. Using the Langevin dynamics algorithm as our example, we introduce a new type of prior for iterative algorithms that ``learns'' from the past trajectory, using a form of \emph{hypothesis testing}, in order to not ``pay'' again for information obtained at previous iterations.
Experiments show that our new bound is tighter than \citep{li2019generalization,negrea2019information}, especially in the late stages of training, where the hypothesis test component of the bound \emph{discounts} the contributions of new gradients. Our new bounds are non-vacuous for a great deal more epochs than related work, and do not diverge or exceed $1$ even when severe overfitting occurs.
\subsection{Contributions}
\begin{enumerate}[leftmargin=*,itemsep=-0.05em] 
\item We characterize the connections between the $\smash{\IWS}$ and $\smash{\SZB}$. 
   We show that $\smash{\SZB}$ is always less than the $\smash{\IWS}$ for any data distribution, learning algorithms and $k$. 
   Further, we prove that $\smash{\SZB}$ converges to $\smash{\IWS}$ when $k$ goes to infinity for finite parameter spaces.
\item We provide novel generalization bounds that relate generalization to the mutual information between learned parameters and a random subset of the random indices $U_1,\dots U_n$. 
\item We apply our generalization bounds to the Langevin dynamics algorithm by constructing a specific \emph{generalized prior and posterior}. We employ a generalized prior that learns about the values of the indices $\smash{U}$ from the optimization trajectory. To our knowledge, this is the first generalized prior that learns about the dataset from the iterates of the learning algorithm.
\item We show empirically that our bound on the expected generalization error of Langevin dynamics algorithm is tighter than other existing bounds in the literature.  
\end{enumerate}

\renewcommand{\SS}{\mathcal S}
\newcommand{\TT}{\mathcal T}
\subsection{Definitions from Probability and Information Theory}
Let $\SS,\TT$ be measurable spaces, let $\ProbMeasures{\SS}$ be the space of probability measures on $\SS$, and
define a probability kernel from $\SS$ to $\TT$ to be a measurable map from $\SS$ to $\ProbMeasures{\TT}$. 
For random elements $X$ in $\SS$ and $Y$ in $\TT$,
write $\Pr[X] \in \ProbMeasures{\SS}$ for the distribution of $X$ and 
write $\cPr{Y}{X}$ for (a regular version of) the conditional distribution of $X$ given $Y$, viewed as a $\sigma(Y)$-measurable random element in $\ProbMeasures{\SS}$. 
Recall that $\cPr{Y}{X}$ is a regular version if, for some probability kernel $\kappa$ from $\TT$ to $\SS$, we have $\cPr{Y}{X} = \kappa(Y)$ a.s. .
If $Y$ is $\sigma(X)$-measurable then $Y$ is a function of $X$. If random measure, $P$, is $\sigma(X)$-measurable then the measure $P$ is determined by $X$, but a random element $Y$ with $\cPr{X}{Y}=P$ is not $X$ measurable unless it is degenerate.
If $X$ is a random variable,
write $\EE X$ for the expectation of $X$ and 
write $\cEE{Y}X$ or $\EE[X | Y]$ for (an arbitrary version of) the conditional expectation of $X$ given $Y$, which is $Y$-measurable.  
For a random element $X$ on $\SS$ and a probability kernel $P$ from $\SS$ to $\TT$, the composition $P(X)\defas P\circ X$ is a $\sigma(X)$-measurable random measure of a random element taking values in $\TT$. We occasionally use this notation to refer to a kernel $P$ implicitly by the way it acts on $X$. 

Let $P,\ Q$ be probability measures on a measurable space $\SS$.
For a $P$-integrable or nonnegative measurable function $f$,
let $\smash{P[f] = \int f \dee P}$.
When $Q$ is absolutely continuous with respect to $P$, denoted $Q \ll P$,
we write $\smash{\rnderiv{Q}{P}}$ for 
the Radon--Nikodym derivative of $Q$ with respect to $P$.
We rely on several notions from information theory:
The \defn{KL divergence} \defn{of $Q$ with respect to $P$},
denoted $\KL{Q}{P}$, is $\smash{Q[ \log \rnderiv{Q}{P} ]}$ when $Q \ll P$ and $\infty$ otherwise.
Let $X$, $Y$, and $Z$ be random elements, and let $\otimes$ form product measures.
The \defn{mutual information between $X$ and $Y$} 
is
$
\minf{X;Y} = \KL{\Pr[(X,Y)]} { \Pr[X] \otimes \Pr[Y]}.
$
The \defn{disintegrated mutual information between $X$ and $Y$ given $Z$,} is\footnote{
Letting $\phi$ satisfy $\phi(Z)= \dminf{Z}{X;Y}$ a.s., define $\minf{X,Y\vert Z=z} = \phi(z)$. This notation is necessarily well defined only up to a null set under the marginal distribution of $Z$.
}
\*[
 \dminf{Z}{X;Y} = \KL{\cPr{Z}{(X,Y)}}{\cPr{Z}{X} \otimes \cPr{Z}{Y} } .
\]
The \defn{conditional mutual information} of $X$ and $Y$ given $Z$ is $\smash{\minf{X;Y\vert Z} = \EE \dminf{Z}{X,Y}}$.

\section{Connections between $\smash{\IWS}$ and \smash{\SZB}}
\label{sec:mi-gen}
In this section, we compare approaches for the information-theoretic analysis of generalization error, and we aim to unify the two main information-theoretic
approaches for studying generalization.  In \cref{thm:cmi_lower_iomi,thm:k_inf} we
will show that for any learning algorithm and any data distribution, 
\SZB provides a tighter measure of dependence than \IWS, and that one can recover \IWS--based bounds from \SZB for finite parameter spaces. 

A fundamental difference between \IWS and \SZB is that \SZB is bounded by $n\log k$ 
\citep{steinke2020reasoning}, while
\IWS can be infinite even for learning algorithms that provably generalize \citep{BuZouVeeravalli2019}.
One of the motivations of \citeauthor{steinke2020reasoning} was 
that 
proper empirical risk minimization algorithms over threshold functions on $\Reals$ have large $\smash{\IWS}$ \citep{bassily2018learners}. 
In contrast, some such algorithms have small $\smash{\SZB}$. 
Our first result shows that $\smash{\SZB}$ is never larger than $\smash{\IWS}$.
\begin{theorem}\label{thm:cmi_lower_iomi}
For every $k\geq 2$, $\minf{W;S} = \minf{W;\tilde{Z}^{(k)}} + \minf{W;\ssindex{k}\vert \supersam{k}}$
and
$$\SZB \le \IWS.$$
\end{theorem}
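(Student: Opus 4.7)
The plan is to deduce the decomposition $\minf{W;S} = \minf{W;\supersam{k}} + \minf{W;\ssindex{k}\vert \supersam{k}}$ from two invocations of the chain rule for mutual information, exploiting two structural features of the super-sample construction: that $S$ is a deterministic function of $(\supersam{k},\ssindex{k})$, and that $W$ is conditionally independent of $(\supersam{k},\ssindex{k})$ given $S$. The inequality $\SZB\le \IWS$ will then fall out immediately from nonnegativity of mutual information.

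The first step is to show that $\minf{W;S} = \minf{W;(\supersam{k},\ssindex{k})}$. I would consider the joint mutual information $\minf{W;(\supersam{k},\ssindex{k},S)}$ and compute it two ways. Because $S=(Z_{U_1,1},\dots,Z_{U_n,n})$ is a measurable function of $(\supersam{k},\ssindex{k})$, we have $\sigma(\supersam{k},\ssindex{k},S)=\sigma(\supersam{k},\ssindex{k})$, which gives $\minf{W;(\supersam{k},\ssindex{k},S)} = \minf{W;(\supersam{k},\ssindex{k})}$. Separately, the stated construction that $W\sim\Alg(S)$ conditional on $(S,\ssindex{k},\supersam{k})$ yields the Markov relationship $W\indep (\supersam{k},\ssindex{k})\mid S$, i.e., $\minf{W;(\supersam{k},\ssindex{k})\vert S}=0$. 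The chain rule then produces $\minf{W;(\supersam{k},\ssindex{k},S)} = \minf{W;S} + \minf{W;(\supersam{k},\ssindex{k})\vert S} = \minf{W;S}$, and equating the two expressions gives the desired identification.

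The second step is to apply the chain rule once more, splitting the joint information:
\*[\minf{W;(\supersam{k},\ssindex{k})} = \minf{W;\supersam{k}} + \minf{W;\ssindex{k}\vert \supersam{k}}.\]
Combining with the first step yields the claimed decomposition. The inequality $\SZB\le\IWS$ follows at once, since $\minf{W;\supersam{k}}\ge 0$ implies $\SZB = \minf{W;\ssindex{k}\vert \supersam{k}} = \minf{W;S} - \minf{W;\supersam{k}} \le \minf{W;S} = \IWS$. There is no genuine obstacle here; the only delicate point is confirming that regular conditional distributions exist so that the chain rule applies in the forms above, which is ensured by the Borel/measurability assumptions implicit in the paper's setup.
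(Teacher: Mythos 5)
Your proposal is correct and follows essentially the same route as the paper's proof: the chain rule decomposition $\minf{W;\ssindex{k},\supersam{k}} = \minf{W;\supersam{k}} + \minf{W;\ssindex{k}\vert \supersam{k}}$, the identification $\minf{W;\ssindex{k},\supersam{k}} = \minf{W;S}$ via $\sigma(\supersam{k},\ssindex{k})$-measurability of $S$ together with the conditional independence $W\indep(\supersam{k},\ssindex{k})\mid S$, and nonnegativity of mutual information. You merely spell out the two chain-rule invocations a bit more explicitly than the paper does; the substance is identical.
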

Next, we address the role of the size of the super-sample in CMI. In \citep{steinke2020reasoning}, CMI is defined using a super-sample of
size $2n$ ($k = 2$) only. Our next result demonstrates that $\smash{\SZB}$ agree $\smash{\IWS}$ in the limit as $k \to \infty$ when the parameter space is finite.
\begin{theorem}
\label{thm:k_inf}
If the output of $\Alg$ takes value in a finite set then
$$
\lim_{k \to \infty} \SZB = \IWS.
$$
\end{theorem}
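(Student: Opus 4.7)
The plan is to reduce the claim to a variance estimate by leveraging the decomposition already established in \cref{thm:cmi_lower_iomi}: $I(W;S) = I(W;\supersam{k}) + I(W;\ssindex{k}\mid \supersam{k}) = I(W;\supersam{k}) + \SZB$. Since $I(W;S) = \IWS$ does not depend on $k$, proving $\lim_k \SZB = \IWS$ is equivalent to proving $\lim_k I(W;\supersam{k}) = 0$.

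To bound $I(W;\supersam{k})$, I would apply the elementary inequality $\KL{Q}{P} \leq \chi^2(Q \| P) = \sum_w (Q(w)-P(w))^2/P(w)$ (valid for discrete $Q \ll P$, via $\log x \le x-1$). Writing $\Pr[W]$ for the marginal and using that $W$ is finite-valued and $\cPr{\supersam{k}}{W} \ll \Pr[W]$ a.s., this yields
\[
 I(W;\supersam{k}) \;=\; \EE \KL{\cPr{\supersam{k}}{W}}{\Pr[W]} \;\le\; \sum_{w \,:\, \Pr[W=w]>0} \frac{\Var\bigl(\Pr[W=w \mid \supersam{k}]\bigr)}{\Pr[W=w]}.
\]
Because $W$ has finite range, this is a finite sum, so it remains only to show that for each such $w$, the variance of $\Pr[W=w \mid \supersam{k}]$ vanishes as $k \to \infty$.

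The variance calculation is the key technical step. Because $\ssindex{k}$ is uniform on $[k]^n$ and independent of $\supersam{k}$, and because $W$ is conditionally independent of $\supersam{k}$ given $S$, one has the explicit formula
\[
 \Pr[W=w \mid \supersam{k}] \;=\; \frac{1}{k^n} \sum_{u \in [k]^n} f_w\bigl(Z_{u_1,1},\ldots,Z_{u_n,n}\bigr), \qquad f_w(s) \defas \Pr[\Alg(s) = w].
\]
Write $X_u$ for the summand. Since the entries of $\supersam{k}$ are IID, $X_u$ and $X_v$ are \emph{independent} whenever $u_i \neq v_i$ for every $i \in [n]$, so the corresponding covariance vanishes. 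The number of pairs $(u,v) \in [k]^n \times [k]^n$ with at least one coordinate match is $k^{2n} - k^n(k-1)^n$, and each covariance is bounded by $1/4$ because $f_w \in [0,1]$. This yields
\[
 \Var\bigl(\Pr[W=w \mid \supersam{k}]\bigr) \;\le\; \frac{1 - (1-1/k)^n}{4} \;=\; O(n/k),
\]
which tends to $0$ as $k \to \infty$ for fixed $n$. Combining with the chi-squared bound finishes the proof.

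The main obstacle is the variance bookkeeping: one must correctly identify which pairs of summands are independent and handle all partial overlaps $\{i : u_i = v_i\}$ uniformly. The chi-squared trick is the critical device that avoids having to prove uniform integrability of the random KL divergence or to pass the limit inside an expectation; finiteness of the output space is used precisely to keep the chi-squared sum finite.
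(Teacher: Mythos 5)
Your proposal is correct, and its first half coincides with the paper's: both invoke \cref{thm:cmi_lower_iomi} to reduce the claim to $\lim_{k\to\infty}\minf{W;\supersam{k}}=0$, and both use the same mixture representation $\Pr[W=w\mid \supersam{k}]=\frac{1}{k^n}\sum_{u\in[k]^n}f_w(\supersam{k}_u)$. Where you genuinely diverge is the analytic step showing this quantity forces the mutual information to vanish. The paper applies McDiarmid's inequality (each supersample entry perturbs the average by at most $1/k$) to get convergence of $f_w(\supersam{k})$ to $\Pr[W=w]$ in probability, then writes $\minf{W;\supersam{k}}=\sum_w\EE[\phi_w(f_w(\supersam{k}))]$ with $\phi_w(x)=x\log(x/\Pr[W=w])$ bounded and continuous, and concludes by the bounded convergence theorem. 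You instead bound KL by chi-squared, $\minf{W;\supersam{k}}\le\sum_{w:\Pr[W=w]>0}\Var\bigl(\Pr[W=w\mid\supersam{k}]\bigr)/\Pr[W=w]$, and control the variance by covariance counting: summands indexed by $u,v$ with $u_i\neq v_i$ for all $i$ depend on disjoint IID entries and are independent, and the fraction of remaining pairs is $1-(1-1/k)^n\le n/k$. Both arguments use finiteness of the range only to sum over $w$, and your observation that each atom has $\Pr[W=w\mid\supersam{k}]=0$ a.s.\ when $\Pr[W=w]=0$ correctly handles absolute continuity. The trade-off: your route yields an explicit rate, $\minf{W;\supersam{k}}\le\frac{n}{4k}\sum_w\Pr[W=w]^{-1}$, at the price of a constant depending on the smallest atom of $\Pr[W]$, whereas the paper's McDiarmid-plus-weak-convergence argument is purely qualitative but its concentration step does not degrade with small atom probabilities. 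Both are valid proofs of the stated limit.
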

Combining \cref{thm:cmi_main,,thm:k_inf}, we obtain
\[
\label{eq:ege-k-inf}
\EGE \leq \lim_{k \to \infty}\sqrt{\frac{2 \SZB}{n}} = \sqrt{ \frac{2 \IWS}{n}}, 
\]
when the parameter space is finite. Comparing \cref{eq:ege-k-inf} with \cref{thm:RZ_XR_bounds} we observe that \cref{eq:ege-k-inf} is twice as large. In \cref{thm:improved_const}, we present a refined bound based on $\smash{\SZB}$ which asymptotically match \cref{thm:RZ_XR_bounds}. The proofs of the results of this section appear in \cref{apx:proof-connections}.
\section{Sharpened Bounds based on Individual Samples}
\label{sec:gen-bounds}
We now present two novel generalization bounds and show they provide a tighter characterization of the generalization error than \cref{thm:cmi_main}. 
The results are inspired by the improvements on $\smash{\IWS}$ due to \citet{BuZouVeeravalli2019}.
In particular, \cref{thm:gen-bound-rndsub} bounds the expected generalization error
in terms of the mutual information between the output parameter and a random subsequence of the indices $\ssindex{2}$, given the super-sample. \cref{thm:gen-bu-style} provides a generalization bound in terms of the disintegrated mutual information between
each individual element of $\ssindex{2}$ and the output of the learning algorithm, $\ww$. 
The bound in \cref{thm:gen-bu-style} is an analogue of \citep[][Prop.~1]{BuZouVeeravalli2019} for \cref{thm:cmi_main}. 
In this section as in \citet{steinke2020reasoning}, we only consider $\supersam{k}$ and $\ssindex{k}$ with $k=2$, so
we will drop the superscript from $\smash{\ssindex{k}}$.
Let $U = \rbra {U_1,\hdots,U_n }$. The proofs for the results of this section appear in \cref{apx:proof-bounds}.
 
\begin{theorem}
\label{thm:gen-bound-rndsub}
Fix $m \in [n]$ and let $J=\left(J_1,\hdots,J_m\right)$ be a random subset of $\range{n}$, 
distributed uniformly among all subsets of size $m$ and independent from $W,~\tilde{Z}^{(2)}$, and $U$.
Then
\[
	\EGE \leq 	\EE\sqrt{ \frac{ 2  \dminf{\supersam{2}}{W;U_J\vert J}}{m} }.
\]
\end{theorem}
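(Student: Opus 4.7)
My plan is to imitate the proof of \cref{thm:cmi_main}, but replace the full index vector $U$ by its subvector $U_J$ and extract independence across the $m$ coordinates of $U_J$ in order to pick up the factor $1/m$ instead of $1/n$. The starting point is the standard ghost-sample identity: since the two rows of $\supersam{2}$ are iid $\Dist$ and $W$ depends on $\supersam{2}$ only through $S = (Z_{U_i,i})_{i=1}^n$, conditional on $U$ the flipped entry $Z_{3-U_i,i}$ is a $\Dist$-sample independent of $W$, so
\*[
\EGE = \EE\sbra{\tfrac{1}{n}\sum_{i=1}^n \parens{\loss(Z_{3-U_i,i}, W) - \loss(Z_{U_i,i}, W)}}.
\]
Writing $f_i(w,\supersam{2},u) = \loss(Z_{3-u,i}, w) - \loss(Z_{u,i}, w)$, the summand satisfies $|f_i| \leq 1$ and the crucial antisymmetry $f_i(w,\supersam{2},1) + f_i(w,\supersam{2},2) = 0$.

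Because $J$ is independent of $(W, \supersam{2}, U)$ and each index $i$ belongs to $J$ with probability $m/n$, a direct conditioning argument gives
\*[
\EGE = \EE\sbra{\tfrac{1}{m}\sum_{k=1}^m f_{J_k}(W, \supersam{2}, U_{J_k})}.
\]
Set $h = \tfrac{1}{m}\sum_k f_{J_k}(W, \supersam{2}, U_{J_k})$. The heart of the argument is a Donsker--Varadhan bound conditional on $\supersam{2}$. Under the product measure $\cPr{\supersam{2}}{W} \otimes \cPr{\supersam{2}}{J, U_J}$, the coordinates of $U_J$ remain iid uniform on $\{1,2\}$ independently of $(W, J)$; antisymmetry then makes each summand mean-zero and $[-1,1]$-valued, and Hoeffding's lemma yields that $h$ is $1/m$-sub-Gaussian with mean zero, first for fixed $(w, j)$ and then under the full product measure since the zero mean is preserved when mixing over $(W, J)$. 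The Xu--Raginsky change-of-measure inequality (the same one underlying \cref{thm:RZ_XR_bounds}) then gives
\*[
\abs{\EE\sbra{h \given \supersam{2}}} \le \sqrt{\tfrac{2\, \dminf{\supersam{2}}{W; J, U_J}}{m}},
\]
and the chain rule together with the independence of $J$ from $(W, \supersam{2})$ reduces $\dminf{\supersam{2}}{W; J, U_J}$ to $\dminf{\supersam{2}}{W; U_J \vert J}$. Taking outer expectation and using $\EGE = \EE[h]$ together with Jensen's inequality concludes.

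The main subtlety is the sub-Gaussian mixing step: integrating over $(W, J)$ must preserve both the zero mean and the sub-Gaussian parameter $1/m$ of $h$, which in turn hinges on the antisymmetry identity from Step 1 forcing the inner expectation over $U_J$ to vanish \emph{for every} $(w, j)$, not merely on average. Once that is secured, the remaining pieces---Xu--Raginsky, the chain rule for mutual information, and Jensen's inequality---are entirely standard.
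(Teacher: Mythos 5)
Your proposal is correct and follows essentially the same route as the paper's proof: rewrite the generalization error over the random subset $J$, apply the Donsker--Varadhan/change-of-measure bound conditionally on $\supersam{2}$ with Hoeffding's lemma giving the $\exp(\lambda^2/2m)$ moment bound under the product measure (where the coordinates of $U_J$ are iid symmetric and each summand is mean-zero by antisymmetry), and then use the chain rule with the full independence of $J$ to identify the KL term with $\dminf{\supersam{2}}{W;U_J\vert J}$. The only cosmetic quibble is that the final step needs no Jensen's inequality---taking the outer expectation of the conditional bound already yields $\EE\sqrt{2\,\dminf{\supersam{2}}{W;U_J\vert J}/m}$ directly.
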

By applying Jensen's inequality to \cref{thm:gen-bound-rndsub}, we obtain 
\[
\label{eq:gen-rnd-sub-after-jensen}
\EGE \leq \sqrt{ \frac{2 \minf{W;U_J\vert \supersam{2},J}}{m} }.
\]
Our next results in \cref{thm:gen-rndsub-prop} let us compare \cref{eq:gen-rnd-sub-after-jensen} for different values of $m=\card{J}$. 
\begin{theorem}
\label{thm:gen-rndsub-prop}
Let $m_1< m_2 \in \range{n}$,
and let $J^{(m_1)},J^{(m_2)}$ be random subsets of $[n]$,
distributed uniformly among all subsets of size $m_1$ and $m_2$, respectively, and 
independent from $W,~\tilde{Z}^{(2)}$, and $U$.
Then
\[
\label{eq:monoton-wrt-m}
\frac{\minf{W;U_{J^{(m_1)}}\vert \supersam{2},J^{(m_1)} }}{m_1} 
\leq \frac{\minf{W;U_{J^{(m_2)}}\vert \supersam{2} ,J^{(m_2)}}}{m_2}.
\]
Consequently, taking $m_2 = n$, for all $1\leq m_1\leq n$
\[
\label{eq:our-better-cmi}
 \EE\sqrt{ \frac{2\ \dminf{\supersam{2}}{W;U_{J^{(m_1)}} \vert J^{(m_1)}}}{m_1} } \leq   \sqrt{ \frac{ 2 \minf{W;U\vert \supersam{2}}}{n}}.
\]
\end{theorem}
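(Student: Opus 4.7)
The plan is to establish \eqref{eq:monoton-wrt-m} by rewriting both normalized mutual informations as averages of $\minf{W; U_A \vert \supersam{2}}$ over size-$m$ subsets $A$ and invoking Han's inequality for entropy. The second claim \eqref{eq:our-better-cmi} then follows by combining \eqref{eq:monoton-wrt-m} specialized to $m_2 = n$ with Jensen's inequality.

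Because $J^{(m)}$ is uniform on size-$m$ subsets of $\range{n}$ and independent of $(W, \ssindex{2}, \supersam{2})$, disintegrating over $J^{(m)}$ yields $\minf{W; U_{J^{(m)}} \vert \supersam{2}, J^{(m)}} = \binom{n}{m}^{-1} \sum_{\abs{A}=m} \minf{W; U_A \vert \supersam{2}}$. Since $U_1, \ldots, U_n$ are i.i.d.\ uniform on $\{1,2\}$ and independent of $\supersam{2}$, one has $H(U_A \vert \supersam{2}) = \abs{A} \log 2$, so $\minf{W; U_A \vert \supersam{2}} = \abs{A} \log 2 - H(U_A \vert W, \supersam{2})$. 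Dividing by $m$ and averaging over size-$m$ subsets gives
\*[
\frac{\minf{W; U_{J^{(m)}} \vert \supersam{2}, J^{(m)}}}{m} = \log 2 - \frac{1}{m \binom{n}{m}} \sum_{A \subset \range{n},\ \abs{A} = m} H(U_A \vert W, \supersam{2}) .
\]
Han's classical inequality states that for any jointly distributed $X_1, \ldots, X_n$, the map $m \mapsto (m\binom{n}{m})^{-1} \sum_{\abs{A}=m} H(X_A)$ is non-increasing. Applied conditionally on $(W, \supersam{2})$ with $X_i = U_i$, and then integrated over that pair, it shows that the subtracted term above is non-increasing in $m$, which yields \eqref{eq:monoton-wrt-m}.

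For \eqref{eq:our-better-cmi}, I note that $\dminf{\supersam{2}}{W; U_{J^{(m_1)}} \vert J^{(m_1)}}$ is a $\supersam{2}$-measurable random variable whose expectation equals $\minf{W; U_{J^{(m_1)}} \vert \supersam{2}, J^{(m_1)}}$. Jensen's inequality for the concave function $\sqrt{\cdot}$, applied to the expectation over $\supersam{2}$, upper bounds the left-hand side by $\sqrt{2 \minf{W; U_{J^{(m_1)}} \vert \supersam{2}, J^{(m_1)}} / m_1}$; then \eqref{eq:monoton-wrt-m} with $m_2 = n$---in which case $J^{(n)} = \range{n}$ almost surely and $U_{J^{(n)}} = U$---yields the stated bound. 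The main subtlety is confirming that Han's inequality can be invoked conditionally on $(W, \supersam{2})$; this is immediate because Han's inequality holds pointwise for any joint distribution, so one applies it to the conditional law of $U$ given $(W, \supersam{2})$ and then integrates.
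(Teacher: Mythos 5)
Your proposal is correct and follows essentially the same route as the paper: rewrite the normalized conditional mutual information as an average of $\minf{W;U_A\vert \supersam{2}}$ over size-$m$ subsets (using the independence of $J^{(m)}$ from $(W,U,\supersam{2})$), reduce to normalized conditional entropies $\entr{U_A\vert W,\supersam{2}}$ via the fact that $U$ is i.i.d.\ and independent of $\supersam{2}$, apply Han's inequality conditionally on $(W,\supersam{2})$, and finish \eqref{eq:our-better-cmi} with Jensen's inequality. The only cosmetic difference is that the paper isolates the conditional Han's inequality as a separate lemma with its own proof, whereas you obtain it by applying the classical Han's inequality pointwise to the conditional law of $U$ given $(W,\supersam{2})$ and integrating, which is equally valid.
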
  
\begin{corollary}
$
	\EGE \leq  \sqrt{2\ \minf{W;U_J\vert \supersam{2} , J} /m}.
$
The case $m=\card{J}=n$ is equivalent to \cref{thm:cmi_main}. 
The bound is increasing in $m \in \range{n}$, and, the tightest bound is achieved when $m=\card{J}=1$. Also, \cref{eq:our-better-cmi} shows our bound in \cref{thm:gen-bound-rndsub} is tighter than \cref{thm:cmi_main} for $k=2$.
\end{corollary}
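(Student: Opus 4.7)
The plan is to assemble the Corollary directly from the two theorems immediately preceding it, since the Corollary is essentially a repackaging of their consequences with a few trivial observations.

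First, I would derive the headline inequality $\EGE \leq \sqrt{2\,\minf{W;U_J\vert \supersam{2},J}/m}$ by applying Jensen's inequality (concavity of the square root) to the bound of \cref{thm:gen-bound-rndsub}, interchanging $\EE$ and $\sqrt{\,\cdot\,}$ so that the outer expectation becomes the unconditional mutual information $\minf{W;U_J\vert \supersam{2},J}$. This is exactly the step displayed in \cref{eq:gen-rnd-sub-after-jensen}.

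Next I would handle the boundary case $m=n$. Here there is a unique subset of $[n]$ of size $n$, namely $[n]$ itself, so the random variable $J$ is deterministic and equal to $[n]$; conditioning on $J$ is therefore trivial, and $U_J = U$. Hence $\minf{W;U_J\vert \supersam{2},J} = \minf{W;U\vert \supersam{2}} = \SZB$, and the bound reduces exactly to $\sqrt{2\SZB/n}$, which is \cref{thm:cmi_main} (for $k=2$).

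For monotonicity, I would invoke \cref{eq:monoton-wrt-m} from \cref{thm:gen-rndsub-prop}: for any $m_1 < m_2$ in $[n]$, the quantity $\minf{W;U_{J^{(m)}}\vert \supersam{2},J^{(m)}}/m$ is nondecreasing in $m$. Since the square root is monotone, the Jensen-form bound $\sqrt{2\,\minf{W;U_J\vert \supersam{2},J}/m}$ is also nondecreasing in $m$, so the tightest bound among $m \in [n]$ is attained at $m=1$. Finally, for the comparison with \cref{thm:cmi_main}, \cref{eq:our-better-cmi} (the $m_2=n$ specialization of \cref{thm:gen-rndsub-prop}) gives exactly
\*[
\EE\sqrt{\frac{2\,\dminf{\supersam{2}}{W;U_{J^{(m_1)}}\vert J^{(m_1)}}}{m_1}} \leq \sqrt{\frac{2\,\minf{W;U\vert \supersam{2}}}{n}},
\]
showing the bound of \cref{thm:gen-bound-rndsub} (before Jensen, even) is tighter than \cref{thm:cmi_main} for every $m_1 \in [n]$.

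There is no real obstacle here: the Corollary is a consequence-level statement that collects what has already been proved. The only thing requiring a moment's care is the $m=n$ identification, ensuring that the degenerate random subset $J=[n]$ really does collapse the conditional mutual information on the left-hand side to $\SZB$ as defined in \cref{def:cmi}.
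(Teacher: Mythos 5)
Your proposal is correct and matches the paper's intended argument: the corollary is assembled exactly as you describe, from Jensen's inequality applied to \cref{thm:gen-bound-rndsub} (i.e., \cref{eq:gen-rnd-sub-after-jensen}), the monotonicity statement \cref{eq:monoton-wrt-m} of \cref{thm:gen-rndsub-prop}, and the degenerate case $m=n$ where $J=[n]$ is deterministic so that $\minf{W;U_J\vert \supersam{2},J}=\minf{W;U\vert\supersam{2}}=\SZB$. Your handling of each piece, including the comparison via \cref{eq:our-better-cmi}, is the same as the paper's.
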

To further tighten \cref{thm:gen-rndsub-prop} when $m=1$, we show that we can pull the expectation over both $\supersam{2}$ and $J$ outside the concave square-root function.
\begin{theorem}
\label{thm:gen-bu-style}
Let $J\dist\unifdist([n])$ (i.e., $m=1$ above) be
independent from $W,~\tilde{Z}^{(2)}$, and $U$.
Then
\[
\EGE \leq 
&\EE \sqrt{2 \dminf{\supersam{2},J}{W;U_J} }
=  \frac{1}{n}  \sum_{i=1}^{n} \EE \sqrt{2 \dminf{\supersam{2}}{W;U_i} }.
\]
\end{theorem}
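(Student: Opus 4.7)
The plan is to adapt the individual-sample argument of \citet{BuZouVeeravalli2019} to the conditional mutual information framework, carrying out the sub-Gaussian change-of-measure step conditionally on the super-sample $\supersam{2}$. The payoff is that expectations over both $\supersam{2}$ and $J$ may be pulled \emph{outside} the square root, which is precisely the sharpening over the $m=1$ Jensen corollary of \cref{thm:gen-bound-rndsub}.

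The first step is to rewrite the generalization error as an average of $n$ per-coordinate ``swap gaps.'' Define
$$g_j(u, w; \tilde{z}) \defas \loss(\tilde{z}_{3-u, j}, w) - \loss(\tilde{z}_{u, j}, w), \qquad j \in \range{n},\ u \in \{1,2\}.$$
Conditional on $(S, U)$, the entry $Z_{3-U_j, j}$ is a fresh $\Dist$-draw independent of $W$, so $\EE\brackets{\loss(Z_{3-U_j, j}, W)} = \EE\brackets{\Risk{\Dist}{W}}$; averaging over $j$ yields
$$\EGE = \frac{1}{n}\sum_{j=1}^{n} \EE\brackets[\big]{g_j(U_j, W; \supersam{2})}.$$

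Next, fix $j$ and condition on $\supersam{2}$. Let $U_j^*$ be an independent copy of $U_j$, uniform on $\{1,2\}$ and independent of $W$ given $\supersam{2}$. The antisymmetry $g_j(1, \cdot; \cdot) = -g_j(2, \cdot; \cdot)$ forces $\EE[g_j(U_j^*, W) \mid \supersam{2}] = 0$, and the bound $|g_j| \le 1$ (since $\loss \in [0,1]$) combined with Hoeffding's lemma makes $g_j(U_j^*, W)$ $1$-sub-Gaussian under $\cPr{\supersam{2}}{U_j^*} \otimes \cPr{\supersam{2}}{W}$. The Donsker--Varadhan sub-Gaussian bound, applied with reference measure $\cPr{\supersam{2}}{U_j^*} \otimes \cPr{\supersam{2}}{W}$ and target $\cPr{\supersam{2}}{(U_j, W)}$, gives
$$\abs{\EE\brackets{g_j(U_j, W) \mid \supersam{2}}} \le \sqrt{2\, \dminf{\supersam{2}}{W; U_j}} \quad \text{a.s.}$$
Taking expectations over $\supersam{2}$, summing over $j$, and dividing by $n$ produces the first displayed inequality of the theorem. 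The equality with $\EE\sqrt{2\, \dminf{\supersam{2}, J}{W; U_J}}$ is then immediate from $J \dist \unifdist(\range{n})$ being independent of $(\supersam{2}, U, W)$, together with the identity $\dminf{\supersam{2}, J=j}{W; U_J} = \dminf{\supersam{2}}{W; U_j}$.

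The main obstacle is discipline about the order of conditioning. The uniform bound $|g_j| \le 1$ controls the sub-Gaussian parameter pointwise in $\supersam{2}$, so the averaging over $\supersam{2}$ (and subsequently over $J$) happens outside the square root; integrating $\supersam{2}$ before invoking sub-Gaussianity would force a Jensen step that reverts the bound to a weaker form. This careful order of operations is what distinguishes the present bound from a direct corollary of \cref{thm:cmi_main} or \cref{thm:gen-bound-rndsub}.
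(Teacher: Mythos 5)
Your proposal is correct and follows essentially the same route as the paper's proof: decompose $\EGE$ into per-index ghost-sample differences, apply the Donsker--Varadhan / sub-Gaussian change of measure conditionally on $\supersam{2}$ for each individual $U_i$ (your swap gap $g_j$ is exactly the paper's $(-1)^{U_i}\rbra{\loss(Z_{1,i},W)-\loss(Z_{2,i},W)}$, with the same symmetry-based centering and Hoeffding step), and only then average over $i$ and $\supersam{2}$ outside the square root. The concluding identification of the $\dminf{\supersam{2},J}{W;U_J}$ form via independence of $J$ matches the paper as well.
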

\begin{remark}
\cref{thm:gen-bu-style} is tighter than \cref{thm:cmi_main} since
\[
 \frac{1}{n}  \sum_{i=1}^{n} \EE \sqrt{2 \dminf{\supersam{2}}{W;U_i} }
 \leq  \sqrt{ \sum_{i=1}^{n}\frac{2}{n} \minf{W;U_i\vert \supersam{2}} }
 \leq  \sqrt{\frac{2}{n}\minf{W;U\vert \supersam{2}}}
\]
The first inequality is Jensen's,
while the second follows from the independence of indices $U_i$.
\end{remark}
\subsection{Controlling CMI bounds using KL Divergence}
It is often difficult to compute MI directly. One standard approach in the literature is to bound MI by the expectation of the KL divergence of the conditional distribution of the parameters given the data (the ``posterior'') with respect to a ``prior''.  The statement below is adapted from  \citet{negrea2019information}.
\begin{lemma}\label{lemma:variation-mi-kl}
Let $X$, $Y$, and $Z$ be random elements.
For all $\sigma(Z)$-measurable random probability measures $P$ on the space of $Y$,
\*[
	\dminf{Z}{X; Y} 
      &\leq \cEE{Z}[\KL{\cPr{X,Z}{Y}}{P}]\ \textrm{a.s.,}
      && \text{with a.s. equality for } P= \cEE{Z}[\cPr{X,Z}{Y}]= \cPr{Z}{Y}.
\]
\end{lemma}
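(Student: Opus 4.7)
The plan is to establish a single identity from which both the inequality and the a.s.\ equality case follow immediately:
\[
\cEE{Z}[\KL{\cPr{X,Z}{Y}}{P}] = \dminf{Z}{X;Y} + \KL{\cPr{Z}{Y}}{P} \quad \textrm{a.s.}
\]
Since $\KL{\cPr{Z}{Y}}{P}\geq 0$ with a.s.\ equality at $P=\cPr{Z}{Y}$, the lemma is an immediate consequence.

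First I would reduce to the case where $\cPr{Z}{Y}\ll P$ and $\cPr{X,Z}{Y}\ll\cPr{Z}{Y}$ hold a.s., since otherwise the left-hand side is a.s.\ $+\infty$ and the inequality is trivial. Under these absolute continuities, I would invoke the chain rule for Radon--Nikodym derivatives to write, $\cPr{X,Z}{Y}$-a.s.,
\[
\log \rnderiv{\cPr{X,Z}{Y}}{P}(y) = \log \rnderiv{\cPr{X,Z}{Y}}{\cPr{Z}{Y}}(y) + \log \rnderiv{\cPr{Z}{Y}}{P}(y),
\]
integrate both sides against $\cPr{X,Z}{Y}$, and then apply $\cEE{Z}$.

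The first resulting term, $\cEE{Z}[\KL{\cPr{X,Z}{Y}}{\cPr{Z}{Y}}]$, equals $\dminf{Z}{X;Y}$ by the chain rule for KL divergence applied to the factorization $\cPr{Z}{(X,Y)}=\cPr{Z}{X}\otimes \cPr{X,Z}{Y}$ and the product $\cPr{Z}{X}\otimes\cPr{Z}{Y}$ — the $X$-marginals match, leaving only the conditional piece. For the second term, the integrand $\log \rnderiv{\cPr{Z}{Y}}{P}$ is a function of $y$ alone (given $Z$); by the tower property, marginalizing over $X$ via $\cPr{Z}{X}$ turns the inner $y$-integral against $\cPr{X,Z}{Y}$ into an integral against $\cPr{Z}{Y}$, which collapses the second term to $\KL{\cPr{Z}{Y}}{P}$.

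The main technical obstacle is the careful bookkeeping of measurability of the random Radon--Nikodym derivatives and the conditional-Fubini/tower step — exactly why the conclusion is stated a.s.\ rather than everywhere. Once the standard disintegration machinery produces jointly measurable versions of $\cPr{X,Z}{Y}$, $\cPr{Z}{Y}$, and their RN derivatives against the $\sigma(Z)$-measurable kernel $P$, the identity drops out, and the lemma follows from non-negativity of KL and direct substitution.
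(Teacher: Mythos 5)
Your proof is correct: the a.s.\ identity $\cEE{Z}[\KL{\cPr{X,Z}{Y}}{P}] = \dminf{Z}{X;Y} + \KL{\cPr{Z}{Y}}{P}$ (with the trivially infinite case handled separately) immediately gives both the inequality and the equality case at $P=\cPr{Z}{Y}$. The paper itself supplies no proof of \cref{lemma:variation-mi-kl}---it is adapted from \citet{negrea2019information}---and your chain-rule/tower-property decomposition is essentially the standard argument behind that cited result, so there is nothing to flag beyond the measure-theoretic bookkeeping (regular versions, a.s.\ absolute continuity) that you already acknowledge.
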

We refer to the conditional law of $W$ given $S$ as the \defn{``posterior"  of $W$ given $S$}, which we denote $\smash{Q=\cPr{S}{W}=\cPr{\supersam{2},U}{W}}$, and to $P$ as the \defn{prior}.
This can be used in combination with, for example, \cref{lemma:variation-mi-kl} and \cref{thm:cmi_main} to obtain that for any $\smash{\supersam{2}}$-measurable random prior $\smash{P(\supersam{2})}$
\[
\label{eq:gen-cmi-kl}
\EGE
		 \leq \sqrt{ \frac{ 2\ \minf{W;U\vert \supersam{2}}}{n} }\leq \sqrt{ \frac{ 2 \EE [\KL{Q}{P(\supersam{2})}]}{n}}.
\]
Note that the prior only has access to $\smash{\supersam{2}}$, therefore from its perspective the training set can take $2^{n}$ different values. 
Alternatively, combining \cref{lemma:variation-mi-kl} and \cref{thm:gen-bound-rndsub} yields
\[
	\EGE
		 &\leq \EE \sqrt{ \frac{2\ \cEE{\supersam{2}} {\dminf{\supersam{2}}{W;U_J\vert U_{J^c},J} }}{m} }
         \leq \EE \sqrt{ \frac{2 \cEE{\supersam{2}}{[\KL{Q}{P(\supersam{2},U_{J^c},J})]}}{m}} .
\label{eq:gen-rndsub-kl}
\]
In \cref{eq:gen-rndsub-kl} the prior has access to $n-m$ samples in the training set, $S_{J^c}$, because $\smash{\supersam{2}_{U_{J^c}}=S_{J^c}}$. However, since $\supersam{2}$ is known to the prior, the training set can take only $2^m$ distinct values from the point of view of the prior in \cref{eq:gen-rndsub-kl}. 
This is a significant reduction in the amount of information that can be carried by the indexes in $U_J$ about the output hypothesis. Consequently, priors can be designed to better exploit the dependence of the output hypothesis and the index set.

\subsection{Tighter Generalization bound for the case $m=1$}
Since the strategy above controls MI-based expressions via KL divergences, one may ask whether a bound derived with similar tools, but directly in terms of KL, can be tighter than the combination \cref{lemma:variation-mi-kl,thm:gen-bound-rndsub}. The following result shows that for $m=1$ a tighter bound can be derived by pulling the expectation over both $U_{J^c}$ and $J$ outside the concave square-root function.
\begin{theorem}
\label{thm:gen-kl-m1}
Let $J\dist\unifdist(\range{n})$ be independent from $W,~U$, and $\smash{\supersam{2}}$. Let $\smash{Q=\cPr{\supersam{2},U}{W}}$ and $P$ be a $\smash{\sigma \rbra[0]{\supersam{2},U_{J^c},J }}$-measurable random probability measure. 
Then
\[
\label{eq:gen-kl-m1}
	\EGE
		& \leq \EE\sqrt{2\ \KL{Q}{P} } .
\]
\end{theorem}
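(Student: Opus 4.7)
My plan is to go sample-by-sample in the spirit of \cref{thm:gen-bu-style}, but to apply the Donsker--Varadhan variational bound in a way that avoids the Jensen loss one would pay by combining \cref{lemma:variation-mi-kl} with \cref{thm:gen-bu-style} directly. Write $\chi_i = 3-2U_i \in \{-1,+1\}$ and $V_i = \loss(\supersam{2}_{2,i},W)-\loss(\supersam{2}_{1,i},W) \in [-1,1]$. The super-sample symmetry identity underlying \cref{thm:gen-bu-style} gives
$$
\EGE \;=\; \frac{1}{n}\sum_{i=1}^n \EE[\chi_i V_i].
$$
Writing $P_i$ for the $\sigma(\supersam{2},U_{-i})$-measurable version of $P$ on the event $\{J=i\}$, and using $J \indep (\supersam{2},U,W)$, the right-hand side of \cref{eq:gen-kl-m1} expands as $\frac{1}{n}\sum_i \EE\sqrt{2\,\KL{Q}{P_i}}$; it therefore suffices to establish, for each $i$, the per-coordinate inequality $\EE[\chi_i V_i]\le \EE\sqrt{2\,\KL{Q}{P_i}}$.

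For a fixed $i$, I would apply Donsker--Varadhan \emph{conditional on the full tuple $(\supersam{2},U)$} (i.e.\ without marginalising out $U_i$), viewing $Q$ and $P_i$ as probability measures on the $W$-space and $\chi_i V_i$ as a $W$-measurable function with $\chi_i$ frozen. Combining DV with Hoeffding's lemma, which applies because $\chi_i V_i \in [-1,1]$ under $P_i$, yields, for every $\lambda>0$ and almost surely in $(\supersam{2},U)$,
$$
\lambda\chi_i\bigl(\EE_Q[V_i]-\EE_{P_i}[V_i]\bigr)\;\le\;\KL{Q}{P_i} + \tfrac{\lambda^2}{2}.
$$
Optimising in $\lambda$ produces the pointwise bound $\chi_i\bigl(\EE_Q[V_i]-\EE_{P_i}[V_i]\bigr)\le \sqrt{2\,\KL{Q}{P_i}}$. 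Taking the full expectation, the left-hand side splits as $\EE[\chi_i V_i] - \EE[\chi_i \EE_{P_i}[V_i]]$; the second term vanishes because $\EE_{P_i}[V_i]$ is $\sigma(\supersam{2},U_{-i})$-measurable and therefore independent of $\chi_i = 3-2U_i$, while $\EE[\chi_i]=0$. Summing the resulting inequalities over $i$ yields the theorem.

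The main subtlety is recognising that one should condition on $U_i$ \emph{before} invoking DV, rather than marginalising it first. Conditioning on $(\supersam{2},U_{-i})$ and averaging $U_i$ inside the KL (as in the proof of \cref{thm:gen-bu-style} via \cref{lemma:variation-mi-kl}) would only deliver $\EE\sqrt{2\,\cEE{\supersam{2},U_{-i}}[\KL{Q}{P_i}]}$, which is larger than the target by Jensen. Freezing $U_i$ keeps $\chi_i$ outside the inner expectation, so that after DV the ``recentring'' term $\chi_i\EE_{P_i}[V_i]$ is already separated; its vanishing under the outer $\EE[\cdot]$ is precisely the mechanism that pulls the $U_i$-average inside the square root for free. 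The Hoeffding constant $\tfrac12$ requires only that $\loss\in[0,1]$, so no smoothness of the loss is needed.
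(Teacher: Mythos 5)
Your proposal is correct, and at the level of tools it follows the same blueprint as the paper's proof: a per-index (Donsker--Varadhan) variational argument with the test function $\lambda\chi_i V_i$, Hoeffding's lemma on the log-moment-generating term, optimization over $\lambda$, and then an outer expectation, with the $J$-average recovering $\EE\sqrt{2\,\KL{Q}{P}}$. The genuine difference is in how the centering is handled, and your version is the more careful one. The paper invokes Hoeffding to claim $\log P[\exp g]\le \lambda^2/2$ outright, justified by a conditional-mean-zero statement given $\rbra[0]{\supersam{2},U_{J^c},J}$; but with $U_J$ frozen (as it must be, since $Q=\cPr{\supersam{2},U}{W}$ and $P$ are measures on $\parspace$ only), the prior mean $P[g]$ is not zero pointwise, and the paper's intermediate almost-sure inequality $Q[g/\lambda]\le\sqrt{2\,\KL{Q}{P}}$ can in fact fail (e.g.\ a constant algorithm with $P=Q$ gives KL $=0$ while the left side is positive with positive probability). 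Your proof keeps the recentring term $\chi_i\,\EE_{P_i}[V_i]$ in the pointwise bound and kills it only after taking the full expectation, using that $\EE_{P_i}[V_i]$ is $\sigma(\supersam{2},U_{-i})$-measurable while $\chi_i$ is mean-zero and independent of it; this is exactly the mechanism that lets the expectation over $U_J$ (not just over $\supersam{2},U_{J^c},J$) sit outside the square root, which is the whole point of \cref{thm:gen-kl-m1} relative to \cref{eq:gen-rndsub-kl}. In other words, your argument both proves the stated result and repairs the loose step in the paper's own write-up; the alternative reading of the paper's step (a joint prior $P\otimes\mathrm{Unif}$ over $(W,U_J)$) would only yield the weaker bound with $\cEE{\supersam{2},U_{J^c},J}[\KL{Q}{P}]$ inside the square root.
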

Here, the KL divergence is between two $\sigma (\supersam{2},J,U)$-measurable random measures, so is random.

\section{Generalization bounds for noisy, iterative algorithms}
\label{sec:ld-gen-bound}
We apply this new class of generalization bounds to non-convex learning. We analyze the Langevin dynamics (LD) algorithm \citep{gelfand1991recursive}, following the analysis pioneered by \citet{PensiaJogLoh2018}. 
The example we set here is a blueprint for building bounds for other iterative algorithms. 
Our approach is similar to the recent advances by \citet{negrea2019information,li2019generalization}, employing data-dependent estimates to obtain easily simulated bounds. We find our new results allow us to exploit past iterates to obtain tighter bounds. The influence of past iterates is seen to take the form of a hypothesis test.

\subsection{Bounding Generalization Error via Hypothesis Testing}
The chain rule for KL divergence is a key ingredient of information-theoretic generalization error bounds for iterative algorithms \citep{PensiaJogLoh2018,negrea2019information,li2019generalization,BuZouVeeravalli2019}. 
$\parspace^{\set{0,\dots,T}}$ denotes the space of parameters generated by an iterative algorithm in $T$ iterations. For any measure, $\nu$, on $\parspace^{\set{0,\dots T}}$, and $W\sim \nu$, let $\nu_0$ denote the marginal law of $W_0$, and $\conditional{\nu}$ denote the conditional law of $W_t$ given $W_0\dots W_{t-1}$.
\begin{lemma}[Chain Rule for KL] \label{lem:kl-decomp}
Let $Q,P$ be probability measures on $\parspace^{\set{0,\dots,T}}$ with $Q_0 = P_0$. The following lemma bounds the KL divergence involving the posterior for the terminal parameter with one involving the sum of the KL divergences over each individual step of the trajectory. 
Then
\*[\textstyle
	\KL{Q_T}{P_T} \leq \KL{Q}{P} = \sum_{t=1}^{T} Q_{0:(t-1)} \brackets{ \KL{\conditional{Q}}{\conditional{P}}}
\]
\end{lemma}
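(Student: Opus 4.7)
The plan is to establish the equality by the standard chain-rule factorization of the Radon--Nikodym derivative, and then derive the inequality as an instance of the data processing inequality (or equivalently by another application of the chain rule that isolates the final marginal).

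For the equality, I would first handle the case where $Q \ll P$ (otherwise both sides are $+\infty$, since $Q\ll P$ implies $\conditional{Q}\ll \conditional{P}$ $Q_{0:(t-1)}$-a.s.\ for each $t$). By disintegration, we can factor
\*[
	\rnderiv{Q}{P}(w_0,\dots,w_T)
	= \rnderiv{Q_0}{P_0}(w_0) \cdot \prod_{t=1}^{T} \rnderiv{\conditional{Q}(w_{0:(t-1)})}{\conditional{P}(w_{0:(t-1)})}(w_t)
\]
$P$-almost surely. Taking logs and integrating against $Q$ splits the KL divergence into a sum. The term coming from $\rnderiv{Q_0}{P_0}$ is $\KL{Q_0}{P_0}$, which vanishes because $Q_0 = P_0$ by hypothesis. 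For each $t\geq 1$, since the log-ratio depends only on $w_{0:t}$, the tower property applied to the conditional distribution $\conditional{Q}$ gives
\*[
	\EE_Q\brackets{ \log \rnderiv{\conditional{Q}}{\conditional{P}}(W_t)}
	= Q_{0:(t-1)}\brackets{\KL{\conditional{Q}}{\conditional{P}}},
\]
which is exactly the summand in the stated identity.

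For the inequality $\KL{Q_T}{P_T}\leq \KL{Q}{P}$, the cleanest route is to apply the same chain-rule decomposition with the trajectory split as $(W_T, W_{0:(T-1)})$ rather than the forward factorization used above. This yields
\*[
	\KL{Q}{P} = \KL{Q_T}{P_T} + Q_T\brackets{\KL{\conditionalback{Q}}{\conditionalback{P}}},
\]
where $\conditionalback{Q}$ (resp.\ $\conditionalback{P}$) denotes a regular conditional distribution of $W_{0:(T-1)}$ given $W_T$ under $Q$ (resp.\ under $P$). Since the KL divergence is nonnegative, the second term is $\geq 0$, and the inequality follows. Equivalently, this is just the data-processing inequality applied to the measurable projection $(w_0,\dots,w_T)\mapsto w_T$.

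The only non-trivial point is the existence and joint measurability of the regular conditional distributions $\conditional{Q}$, $\conditional{P}$, $\conditionalback{Q}$, $\conditionalback{P}$; this is standard under mild assumptions on $\parspace$ (e.g.\ Borel), which are implicit throughout the paper. Otherwise both the factorization and the sign argument are routine, and I do not anticipate any real obstacle. Writing out the $Q\not\ll P$ case is a one-line remark because if $Q\not\ll P$ then one of the conditionals must fail absolute continuity on a set of positive $Q_{0:(t-1)}$-measure, making the right-hand side $+\infty$ as well.
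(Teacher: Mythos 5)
Your proof is correct. Note that the paper does not actually prove this lemma---it is stated as a standard fact (the chain rule for KL divergence, as used in the prior works on noisy iterative algorithms it cites)---and your argument is exactly the canonical one: factor the Radon--Nikodym derivative along the forward disintegration, use $Q_0=P_0$ to kill the initial term and the tower property to identify each summand, then get $\KL{Q_T}{P_T}\leq\KL{Q}{P}$ from data processing (marginalization to the last coordinate) or, equivalently, nonnegativity of the conditional KL term in the alternative decomposition. Your handling of the $Q\not\ll P$ case and the regularity caveats (standard Borel $\parspace$ for regular conditional distributions) are the right remarks and introduce no gap.
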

The benefit of using the chain rule to analyze the iterative algorithm are two-fold: first, we gain analytical tractability; many bounds that appear in the literature implicitly require this form of incrementation \citep{li2019generalization,PensiaJogLoh2018,negrea2019information,BuZouVeeravalli2019}. Second, and novel to the present work, the \emph{information in the optimization trajectory can be exploited} to identify $U$ from the history of $W$. 

In order to understand how the prior may take advantage of information from the optimization trajectory, consider applying \cref{lem:kl-decomp} to the KL term in \cref{eq:gen-rndsub-kl}. We have
\*[
\KL{Q_T}{P_T\big(\supersam{2},U_{J^c},J\big)} \leq \sum_{t=1}^{T} \cEE{\supersam{2},U_{J^c},J}{\brackets{ \KL{\conditional{Q}}{\conditional{P}\big(\supersam{2},U_{J^c},J\big)}}}  .
\]
Here $\conditional{P}\big(\supersam{2},U_{J^c},J\big)$ is a $\sigma \rbra[0]{\supersam{2},U_{J^c},J,W_{0:t-1}}$-measurable random probability measure. 
The prior may use $\smash{U_{J^c}}$, $\smash{\supersam{2}}$, and $J$ to reduce the number of possible values that $U$ can take to $\smash{2^{\card{J}}}$. 
Moreover, since $U_J$ is constant during optimization, $\ww_0,\ww_1,\ww_2,\dots \ww_{t-1}$ may leak some information about $U_J$, and the prior can use this information to tighten the bound by choosing a $\conditional{P}$ that achieves small $\KL{\conditional{Q}}{\conditional{P}}$. 
In the special case where the prior can perfectly estimate $U_J$ from $\ww_0,\ww_1,\ww_2,\dots \ww_{t-1}$, we can set $\conditional{P}=\conditional{Q}$ and $\KL{\conditional{Q}}{\conditional{P}}$ will be zero. As will be seen in the next subsection, we can explicitly design a prior that uses the information in the optimization trajectory for the LD algorithm. 

The process by which the prior can learn from the trajectory can be viewed as an \emph{online hypothesis test}, or binary decision problem, where the prior at time $t$ allocates belief between $2^m$ possible explanations, given by the possible values of $U_J$, based on the evidence provided by $\ww_0,\dots \ww_t$. If the prior is able to identify $U_J$ based on the $\ww$s then the bound stops accumulating, even if the gradients taken by subsequent training steps are large. This means that penalties for information obtained later in training are \emph{discounted} based on the information obtained earlier in training.

\subsection{Example: Langevin Dynamics Algorithm for Non-Convex Learning}
We apply these results to obtain generalization bounds for a gradient-based iterative noisy algorithm, the Langevin Dynamics (LD) algorithm.
For classification with continuous parameters, 
the $0$-$1$ loss does not provide useful gradients.
Typically we optimize a surrogate objective, based on a \emph{surrogate} loss, such as cross entropy.
Write $\sloss: \dataspace \times \parspace \to \Reals$ for the surrogate loss
and let 
$\smash{\SurEmpRisk{S}{w} = \frac{1}{n} \sum_{i=1}^{n} \sloss(Z_i,w)}$ be the empirical surrogate risk.
Let $\eta_t$ be the learning rate at time $t$,
$\beta_t$ the inverse temperature at time $t$ 
and let $\epsilon_t$ be sampled i.i.d. from \ $\Normal(0,\id{d})$.
Then the LD algorithm iterates are given by
\[ \label{eq:ldup} \textstyle
\ww_{t+1} = \ww_t - \eta_t \bgrad{S}{\ww_t} + \sqrt{  \frac{2\eta_t}{\beta_t}} \,\varepsilon_t.
\]
\paragraph{The prior}
We will take $m=1$, and construct a bespoke $\smash{\sigma \rbra[0]{\supersam{2},U_{J^c},J }}$-measurable prior for this problem in order to apply \cref{thm:gen-kl-m1}. 
The prior is based on a \emph{decision function}, $\theta:\Reals\to[0,1]$, which at each time $t+1$ takes in a $\sigma(\ww_0\dots\ww_{t})$-measurable \emph{test statistic}, $\Delta Y_{t}$, and returns a \emph{degree of belief} in favor of the hypothesis $U_J=1$ over $U_J=2$. 
The prior predicts an LD step by replacing the unknown (to the prior) contribution to the gradient of the data point at index $J$ with a $\hat\theta_t=\theta(\Delta Y_{t})$-weighted average of the gradients due to each candidate $\set{Z_{1,J},Z_{2,J}}$.The conditional law of the $t$th iterate under the prior is a $\smash{\sigma \rbra[0]{\supersam{2},U_{J^c},J, \ww_0,\dots \ww_{t} }}$-measurable random measure, as required. 
The exact value of the test statistic is $\Delta Y_{t} = Y_{t,2} - Y_{t,1}$, here the $Y_{0,1}=Y_{0,2}=0$ and $Y_{t,u}$ are defined by the formula in \cref{eq:defn-of-Y}. The conditional law of the $t$th iterate under the prior is described by
\[
\ww_{t+1} = 
   \ww_t 
      -  \textstyle{\frac{\eta_t}{n}\rbra{
          \sum_{\substack{i=1 \\ i\neq J}}^{n} \grad\sloss(Z_i,\ww_t)
         + \hat\theta_t\grad\sloss(Z_{1,J},\ww_t) 
         +(1-\hat\theta_t)\grad\sloss(Z_{2,J},\ww_t)}
      + \sqrt{ \frac{2\eta_t}{\beta_t}} \,\varepsilon_t}.
\]
The test statistic chosen is based on the log-likelihood-ratio test statistic for the independent mean $0$ Gaussian random vectors $\smash{(\epsilon_s)_{s=1}^t}$, which is well known to be \emph{uniformly most powerful} for the binary discrimination of means. Natural choices for $\theta$ are symmetric CDFs, since they treat possible values of $U$ symmetrically, and are monotone in the test statistic. 

We define the \emph{two-sample incoherence} at time $t$ by $
\incoh_{t} = \ssgrad(Z_{1,J},W_t) -\ssgrad(Z_{2,J},W_t).$
$\Theta$ denotes the set of measurable $\theta: \Reals \rightarrow [0,1]$. $Y_{0,1}=Y_{0,2}=0$, and for $t\geq 1$, $Y_{t,1}$ and $Y_{t,2}$ are given by (for $u\in\set{1,2}$)
\begin{equation}\label{eq:defn-of-Y}
\begin{aligned}
Y_{t,u} 
    &\triangleq \sum _{i=1}^{t}  \frac{\beta_{i-1}}{4\eta_{i-1}} \| \ww_{i}-\ww_{i-1}+\eta_{i-1}\frac{n-1}{n}\bgrad{S_{J^c}}{\ww_{i-1}}+\frac{\eta_{i-1}}{n}\ssgrad\left(Z_{u,J},\ww_{i-1}\right)  \|^2.\\
\end{aligned}
\end{equation}

\begin{theorem}[Generalization bound for LD algorithm]
\label{thm:gen_bound_ld}
Let $\set{W_t}_{t\in [T]}$ denote the iterates of the LD algorithm. If $\loss(Z,w)$ is $[0,1]$-bounded then
\begin{equation}
\label{eq:gen_ld_final}
\begin{aligned}
\EE\sbra{ \Risk{\Dist}{\ww_T} - \EmpRisk{S}{\ww_T} }
&\leq \frac{1}{n\sqrt{2}} \inf_{\theta \in \Theta}  
\EE \sqrt{\sum_{t=0}^{T-1} \cEE{\supersam{2},U,J}{ \beta_{t} \eta_{t} \|\incoh_{t} \|^2\Big(\indic{U_J=1}-\theta\left(Y_{t,2}-Y_{t,1}\right)\big)^2 }}.
\end{aligned}
\end{equation}
\end{theorem}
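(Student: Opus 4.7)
My plan is to apply \cref{thm:gen-kl-m1} (the $m=1$ KL-divergence bound with conditioning on $\supersam{2}, U_{J^c}, J$) to the full LD trajectory, use the chain rule for KL (\cref{lem:kl-decomp}) to reduce to per-step Gaussian KL divergences, and then plug in the explicit form of the bespoke prior described in the paragraph preceding the theorem. The role of $Y_{t,u}$ should emerge naturally as a negative log-likelihood of the trajectory under the hypothesis $U_J=u$, which justifies $\Delta Y_t = Y_{t,2} - Y_{t,1}$ as the log-likelihood-ratio test statistic feeding into $\theta$.

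Concretely, let $Q$ be the conditional law (given $\supersam{2}, U$) of the full LD trajectory $W_{0:T}$, and let $P = P(\theta)$ be the $\sigma(\supersam{2}, U_{J^c}, J)$-measurable random probability measure on $\parspace^{\{0,\dots,T\}}$ whose transition kernel at step $t$ is the Gaussian in \cref{eq:ldup} but with the unknown-to-the-prior summand $\grad\sloss(Z_{U_J,J}, W_t)/n$ replaced by $[\hat\theta_t \grad\sloss(Z_{1,J}, W_t) + (1-\hat\theta_t)\grad\sloss(Z_{2,J}, W_t)]/n$, and with the same initialization $Q_0=P_0$. Since $\hat\theta_t$ is $\sigma(\supersam{2}, U_{J^c}, J, W_{0:t})$-measurable, $P$ is a valid $\sigma(\supersam{2}, U_{J^c}, J)$-measurable random kernel. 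By \cref{thm:gen-kl-m1} applied to the terminal iterate and the data-processing step in \cref{lem:kl-decomp}, $\EGE \leq \EE\sqrt{2\,\KL{Q_T}{P_T}} \leq \EE\sqrt{2\,\KL{Q}{P}}$.

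The chain rule then gives $\KL{Q}{P} = \sum_{t=1}^{T} \EE_{W_{0:t-1}\sim Q}[\KL{\conditional{Q}}{\conditional{P}}]$. Because $\conditional{Q}$ and $\conditional{P}$ are Gaussians with a common covariance $(2\eta_{t-1}/\beta_{t-1}) I$, the per-step KL reduces to $(\beta_{t-1}/(4\eta_{t-1}))\|\mu_Q - \mu_P\|^2$. A short case analysis on $U_J \in \{1,2\}$ shows the mean difference equals $\pm(\eta_{t-1}/n)(\mathds{1}\{U_J=1\}-\hat\theta_{t-1})\,\zeta_{t-1}$, so the per-step KL is
\[
\frac{\beta_{t-1}\eta_{t-1}}{4n^2}\,\bigl(\mathds{1}\{U_J=1\}-\hat\theta_{t-1}\bigr)^2\|\zeta_{t-1}\|^2.
\]
Shifting the summation index and pulling the $1/(2n^2)$ out of $\sqrt{2\,\KL{Q}{P}}$ yields the $1/(n\sqrt{2})$ prefactor in \cref{eq:gen_ld_final}. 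Taking the infimum over $\theta \in \Theta$ (legal because \cref{thm:gen-kl-m1} holds for any admissible prior) finishes the bound.

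The only delicate step I anticipate is the bookkeeping that makes $Y_{t,u}$ in \cref{eq:defn-of-Y} appear: one must observe that under the hypothesis $U_J=u$, the quantity $W_i - W_{i-1} + \eta_{i-1}\frac{n-1}{n}\bgrad{S_{J^c}}{W_{i-1}} + \frac{\eta_{i-1}}{n}\ssgrad(Z_{u,J},W_{i-1})$ equals $\sqrt{2\eta_{i-1}/\beta_{i-1}}\,\varepsilon_{i-1}$, so $-Y_{t,u}$ is, up to additive constants independent of $u$, the conditional log-density of $W_{1:t}$ given $\supersam{2}, U_{J^c}, J$ under $U_J=u$; hence $\Delta Y_t$ is the log-likelihood ratio between the two hypotheses, which is precisely the statistic any reasonable $\theta$ should act on. This observation is not needed for the inequality itself (any measurable $\theta$ is allowed), but it is what makes the ensuing infimum meaningful and explains why symmetric CDFs are natural choices. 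The other routine care is ensuring measurability of $\hat\theta_t$ at each step, which follows from the $\sigma(W_{0:t}, \supersam{2}, U_{J^c}, J)$-measurability of $\Delta Y_t$.
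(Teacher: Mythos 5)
Your proposal is correct and follows essentially the same route as the paper's proof: apply \cref{thm:gen-kl-m1} to the terminal iterate, pass to the full trajectory via the chain rule of \cref{lem:kl-decomp}, and compute the per-step KL between the two equal-covariance Gaussians induced by the LD update and the belief-weighted prior, which yields exactly $\frac{\beta_{t}\eta_{t}}{4n^2}(\indic{U_J=1}-\hat\theta_{t})^2\|\incoh_{t}\|^2$ and hence the stated bound, with the infimum over $\theta\in\Theta$ justified since the argument holds for every admissible prior. Your side remark that the identification of $Y_{t,2}-Y_{t,1}$ as the log-likelihood ratio is only motivational (the inequality holds for any measurable $\theta$ applied to this statistic) is also consistent with the paper, which derives that identity explicitly but uses it only to interpret the prior as an online hypothesis test.
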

\begin{remark}
For $\theta \in \Theta$ with $1-\theta(x) = \theta(-x)$, \cref{eq:gen_ld_final} simplifies to
\begin{equation}
\label{eq:gen_ld_final_simple}
\begin{aligned}
&\EE\sbra{ \Risk{\Dist}{\ww_T} - \EmpRisk{S}{\ww_T} } \leq \frac{1}{n\sqrt{2}}  
\EE \sqrt{\sum_{t=0}^{T-1} \cEE{\supersam{2},U,J}{ \beta_{t} \eta_{t} \|\incoh_{t} \|^2 \theta^2 \big({-1}^{U_J} \left(Y_{t,2}-Y_{t,1}\right)\big)  }}.
\end{aligned}
\end{equation}
For instance $\theta(x)=\frac{1}{2}+\frac{1}{2}\mathrm{tanh}(x)$ and $\theta(x)=\frac{1}{2}+\frac{1}{2}\mathrm{sign}(x)$ satisfy $1-\theta(x) = \theta(-x)$.
\end{remark}
\begin{remark}
By the law of total expectation, for any $\theta \in \Theta$,
$
\EGE 
\leq \frac{1}{2\sqrt{2} n}\EE  \left[V_1 + V_2\right],
$
where
\begin{equation}
 V_u \triangleq \sqrt{ \sum_{t=0}^{T-1} \cEE{\supersam{2},U_{J^c},J,U_J=u}{ \beta_{t} \eta_{t} \|\incoh_{t} \|^2\left(\indic{u=1}-\theta\left( Y_{t,2}-Y_{t,1}\right)\right)^2 }}, \quad u\in \{1,2\}.
\end{equation}
To estimate $V_u$ ($u\in\set{1,2}$) for fixed $J$, the training set is $S_u=\{Z_1,\hdots,Z_{J-1},\tilde Z_{u,J},Z_{J+1},\hdots,Z_{n}\}$. Hence $V_1$, $V_2$ can be simulated from just $n+1$ data points $\smash{\rbra{Z_1,\hdots,Z_{J-1},Z_{J+1},\hdots,Z_{n},\tilde Z_{1,J},\tilde Z_{2,J}} \dist\Dist^{\otimes(n+1)}}$.
\end{remark}
 The generalization bound in \cref{eq:gen_ld_final}  does not place any restrictions on the
learning rate or Lipschitz continuity of the loss or its gradient. In the next corollary we study the asymptotic properties of the bound in \cref{eq:gen_ld_final} when $\sloss$ is $L$-Lipschitz. Then, we
draw a comparison between the bound in this paper and some of the existing bounds in the literature.
\begin{corollary}
\label{cor:gen-lip}
Under the assumption that $\sloss$ is $L$-Lipschitz, we have $\|\incoh_{t}\|\leq 2L$. Then, the generalization bound in \cref{eq:gen_ld_final} can be upper-bounded as
\[
\label{eq:ld-gen-lip}
\EE(\Risk{\Dist}{W_T} - \Risk{S}{W_T}) &\leq 
\frac{\sqrt{2}L}{n} \inf_{\theta \in \Theta}  
\EE \sqrt{\sum_{t=0}^{T-1} \cEE{\supersam{2},U,J}{ \beta_{t} \eta_{t} \left(\indic{U_J=1}-\theta\left(Y_{t,2}-Y_{t,1}\right)\right)^2}}.
\]
\end{corollary}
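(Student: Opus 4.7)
The plan is to derive the corollary as an essentially immediate consequence of \cref{thm:gen_bound_ld} combined with a standard Lipschitz-to-gradient-norm bound. The only real content is handling the two-sample incoherence $\incoh_t$ uniformly in $t$, $S$, and the super-sample; after that, the inequality is obtained by pulling a constant out of the conditional expectation and the square root.

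First, I would argue that $L$-Lipschitz continuity of $\sloss(z,\cdot)$ (in the parameter $w$) implies $\|\grad \sloss(z,w)\| \leq L$ for every $z \in \dataspace$ and every $w$ in the (a.e.) differentiability set. (If this is an issue of generality, one can state the hypothesis directly as $\|\grad \sloss(z,w)\| \leq L$; the paper's convention appears to subsume this.) Applying the triangle inequality to
\*[
\incoh_t = \ssgrad(Z_{1,J},W_t) - \ssgrad(Z_{2,J},W_t)
\]
yields $\|\incoh_t\| \leq \|\ssgrad(Z_{1,J},W_t)\| + \|\ssgrad(Z_{2,J},W_t)\| \leq 2L$, hence $\|\incoh_t\|^2 \leq 4L^2$ almost surely, uniformly in $t$.

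Next I would substitute this pointwise bound into the right-hand side of \cref{eq:gen_ld_final}. Since $4L^2$ is a deterministic constant, it can be pulled out of the inner conditional expectation $\cEE{\supersam{2},U,J}{\cdot}$ and hence out of the sum over $t$. Pulling it out of the square root produces a factor of $2L$, and combining with the prefactor $1/(n\sqrt{2})$ gives $2L/(n\sqrt{2}) = \sqrt{2}L/n$. Since the bound holds for every $\theta \in \Theta$, taking the infimum preserves the inequality, yielding exactly \cref{eq:ld-gen-lip}.

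There is no real obstacle here: the only subtlety is the passage from Lipschitz continuity of $\sloss$ to a uniform gradient-norm bound, which is standard. Everything else is a monotonicity-of-square-root plus pull-out-of-expectation argument, and the arithmetic $2L/\sqrt{2} = \sqrt{2}L$ closes the proof.
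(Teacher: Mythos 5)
Your proposal is correct and follows exactly the (implicit) argument in the paper: the $L$-Lipschitz assumption gives $\|\grad\sloss(z,w)\|\leq L$, so by the triangle inequality $\|\incoh_t\|^2\leq 4L^2$ almost surely, and substituting this constant bound into \cref{eq:gen_ld_final} and pulling $2L$ out of the square root turns $\frac{1}{n\sqrt{2}}$ into $\frac{\sqrt{2}L}{n}$, with the infimum over $\theta$ unaffected. No gaps; the arithmetic and the pointwise pull-out of the constant from the conditional expectation are exactly as the paper intends.
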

\begin{remark}
Under an $L$-Lipschitz assumption, for the LD algorithm, \citet[Thm.~9]{li2019generalization} have
\[
\label{eq:iclr2019-bound}
\EE\left[\Risk{\Dist}{W_T} - \Risk{S}{W_T}\right]
    & \leq \textstyle{\frac{\sqrt{2}L}{n}\sqrt{\sum_{t=0}^{T-1} \beta_{t} \eta_{t} }}.
\]
We immediately see that \cref{eq:ld-gen-lip} provides a constant factor improvement over \cref{eq:iclr2019-bound} by na\"ively using $\theta:x\mapsto 1/2$. 
Our bound has order-wise improvement with respect to $n$  over that of \citet{BuZouVeeravalli2019} and \citet{PensiaJogLoh2018} under the $L$-Lipschitz assumption. 
\citet[App.~E.1]{negrea2019information} obtain
\[
\label{eq:lip_bound_neurip}
\EE\left[\Risk{\Dist}{W_T} - \Risk{S}{W_T}\right]
    & \leq \textstyle{\frac{L}{2(n-1)}\sqrt{\sum_{t=0}^{T-1} \beta_{t} \eta_{t} }}.
\]
which is a constant factor better than our bound for the choice $\theta:x\mapsto 1/2$.
However, this $\theta$ essentially corresponds to no hypothesis test, yielding the same prior as in \citep{negrea2019information}.
For more sophisticated choices of decision function ($\theta$), even under a Lipschitz-surrogate loss assumption, it is difficult to compare our bound with related work because the exact impact of $\theta$-discounting is difficult to quantify analytically. 
\end{remark}
\begin{remark}
A prevailing method for analyzing the generalization error in \citep{negrea2019information,BuZouVeeravalli2019,PensiaJogLoh2018,li2019generalization}  for iterative algorithms is via the chain rule for KL, using priors for the joint distribution of weight vectors that are Markov, i.e., given the $t$th weight, the $(t+1)$th weight is conditionally independent from the trajectory so far.
Existing results using this approach accumulate a "penalty" for each step. In \citep{negrea2019information,BuZouVeeravalli2019,li2019generalization},  the penalty terms are, respectively,  the squared Lipschitz constant, the squared norm of the gradients, and the trace of the minibatch gradient covariance. The penalty term in our paper is the squared norm of "two-sample incoherence", defined in \cref{thm:gen_bound_ld} as the squared norm of the difference between the gradient of a randomly selected training point and the held-out point. 
However, the use of chain rule along with existing ``Markovian'' priors introduces a source of looseness, i.e., the accumulating penalty may diverge to $+\infty$ yielding vacuous bounds (as seen in Fig.~1).\emph{ The distinguishing feature of our data-dependent CMI analysis is that the penalty terms get ``filtered'' by the online hypothesis test via our non-Markovian prior, i.e., our prediction for $t+1$ depends on whole trajectory.} When the true index can be inferred from the previous weights, then the penalty essentially stops accumulating.
\end{remark}
\subsubsection{Empirical Results }
\label{subsec:empirical-results}
In order to better understand the effect of discounting and the degree of improvement due to our new bounds and more sophisticated prior, we turn to simulation studies.
We present and compare the empirical evaluations of the generalization bound in \cref{thm:gen_bound_ld} with the data-dependent generalization bounds in \citet{negrea2019information,li2019generalization}. For brevity, many
of the details behind our implementation are deferred to \cref{app:experminet-details}. 
The functional form of our bounds and \citep{negrea2019information,li2019generalization} are nearly identical as all of them use the chain rule for KL divergence. Nevertheless, the summands appearing in the bounds are different. The bound in \citep{li2019generalization} depends on the squared surrogate loss gradients norm, and the generalization bound in \citet{negrea2019information} depends on the squared norm of \textit{training set incoherence} defined as 
$
\|\ssgrad(Z_{J},W_t) - \frac{1}{n-1}\sum_{i \in \range{n}, i \neq J}\ssgrad(Z_{i},W_t)\|^2
$ where the training set is $\{Z_1,\hdots,Z_n\}$ and $J \sim \unif{\range{n}}$. The first key difference between our bound and others is that the summand in our bound consists of two terms: squared norm of the two-sample incoherence, i.e., $\|\incoh_{t}\|^2$, and the squared error probability of a hypothesis test at time $t$, given by the term $\left(\indic{U_J=1}-\theta\left(\sum_{i=0}^{t} \left(Y_{i,2}-Y_{i,1}\right)\right)\right)^2$ in our bound. 
A consequence of this, and the second fundamental difference between our bound and existing bounds, is that our bound exhibits a trade-off in $\|\incoh_{t}\|^2$ because large $\|\incoh_{t}\|^2$ will make the error of the hypothesis test small on future iterations, whereas the bounds in \citep{negrea2019information,li2019generalization} are uniformly increasing with respect to the squared norm of surrogate loss gradients  and the training set incoherence, respectively. In this section we empirically  evaluate and compare our bound with related work across various neural network architectures and datasets.

\begin{figure}
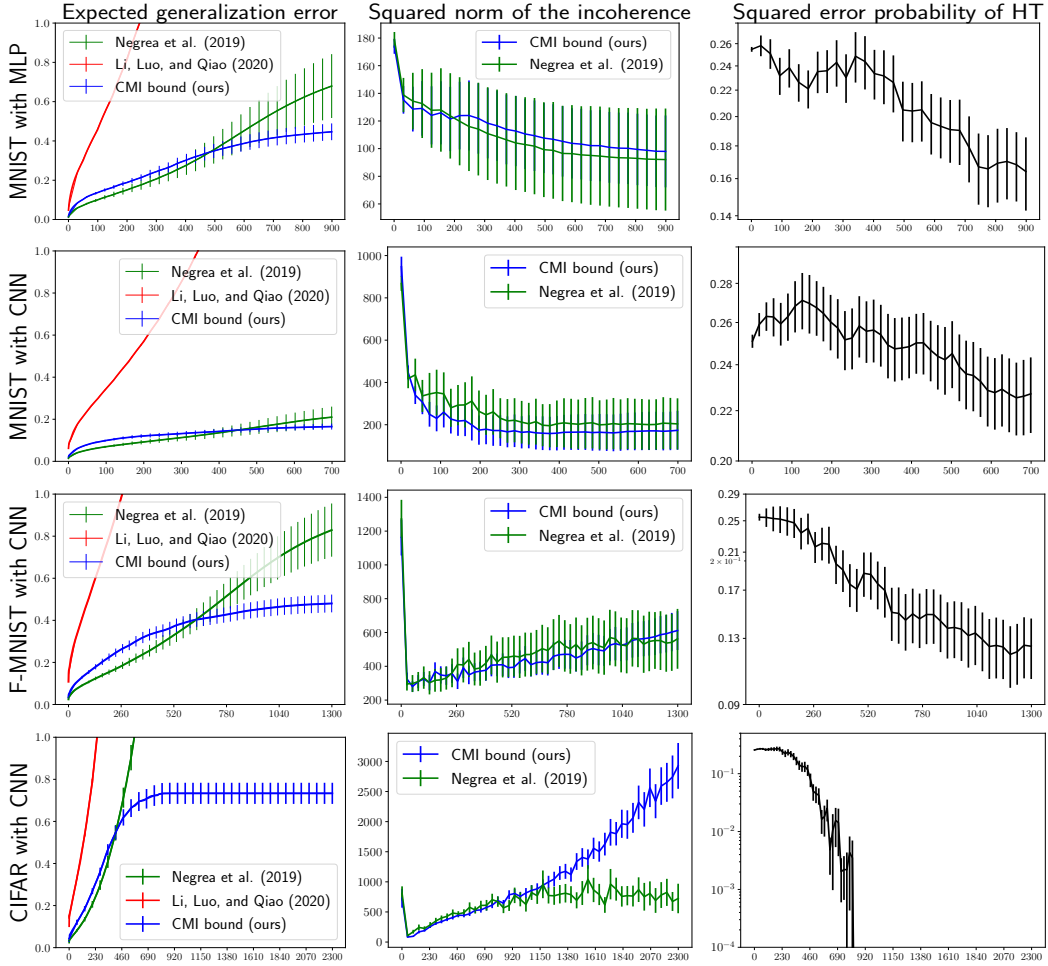

\centering
\begin{subfigure}[b]{.33\linewidth}
  \centering
  \includegraphics[width=1\linewidth, height=3.2cm]{plots/bound-mnist-mlp.pdf}
\end{subfigure}%
\begin{subfigure}[b]{.33\linewidth}
  \centering
  \includegraphics[width=1\linewidth, height=3.2cm]{plots/sq-norm-mlp.pdf}
\end{subfigure}
\begin{subfigure}[b]{.33\linewidth}
  \centering
  \includegraphics[width=1\linewidth, height=3.2cm]{plots/error-mlp.pdf}
\end{subfigure}
\newline
\begin{subfigure}[b]{.33\linewidth}
  \centering
  \includegraphics[width=1\linewidth, height=3.2cm]{plots/bound-mnist-cnn.pdf}
\end{subfigure}%
\begin{subfigure}[b]{.33\linewidth}
  \centering
  \includegraphics[width=1\linewidth, height=3.2cm]{plots/sq-norm-cnn.pdf}
\end{subfigure}
\begin{subfigure}[b]{.33\linewidth}
  \centering
  \includegraphics[width=1\linewidth, height=3.2cm]{plots/error-cnn.pdf}
\end{subfigure}
\newline
\begin{subfigure}[b]{.33\linewidth}
  \centering
  \includegraphics[width=1\linewidth, height=3.2cm]{plots/bound-fmnist.pdf}
\end{subfigure}%
\begin{subfigure}[b]{.33\linewidth}
  \centering
  \includegraphics[width=1\linewidth, height=3.2cm]{plots/sq-norm-fmnist.pdf}
\end{subfigure}
\begin{subfigure}[b]{.33\linewidth}
  \centering
  \includegraphics[width=1\linewidth, height=3.2cm]{plots/error-fmnist.pdf}
\end{subfigure}
\newline
\begin{subfigure}[b]{.33\linewidth}
  \centering
  \includegraphics[width=1\linewidth, height=3.2cm]{plots/bound-cifar.pdf}
\end{subfigure}%
\begin{subfigure}[b]{.33\linewidth}
  \centering
  \includegraphics[width=1\linewidth, height=3.2cm]{plots/sq-norm-cifar.pdf}
\end{subfigure}
\begin{subfigure}[b]{.33\linewidth}
  \centering
  \includegraphics[width=1\linewidth, height=3.2cm]{plots/error-cifar.pdf}
\end{subfigure}
\caption{\small{Numerical results for various datasets and architectures. All the x-axes represent the training iteration. The plots in the first column depict a Monte Carlo estimate of our bounds with that of \citet{negrea2019information,li2019generalization}. The plots in the second column compare the mean of the \textit{training set incoherence} in \citep{negrea2019information} with the two-sample incoherence in our bound. Finally, the plots in the third column show the mean of the squared error probability of the hypothesis testing performed by the proposed prior in our bound. }}
\label{fig:the-plots}
\end{figure}
Using Monte Carlo (MC) simulation, we compared estimates of our expected generalization error bounds with estimates of the bound from \citep{negrea2019information,li2019generalization} for the MNIST \citep{MNIST}, CIFAR10 \citep{CIFAR10data}, and Fashion-MNIST \citep{xiao2017/online} datasets in \cref{fig:the-plots} and \cref{tbl:sum-results}. For all the plots we consider $\theta(x) = \frac{1}{2}(1+\mathrm{erf}(x))$ for our bound. Also, in the last row of \cref{tbl:sum-results}, we report the \textit{unbiased estimate} of our bound optimized over the choice of $\theta$ function. We plot the squared norm of the two sample incoherence and training set incoherence, as well as the squared error probability of the hypothesis test. 
\cref{fig:the-plots,tbl:sum-results} show that our bound is tighter, and remain non-vacuous after many more iterations. 
We also observe that the variances for MC estimates of our bound are smaller than those of \citet{negrea2019information}, and it is also smaller than   \citet{li2019generalization} for CIFAR10 and MNIST-CNN experiments. 
Moreover, we observe that the error probability of the hypothesis test decays with the number of iterations, which matches the intuition that, as one observes more noisy increments of the process, one is more able to determine which point is contributing to the gradient. For CIFAR10, $\|\incoh_{t}\|^2$ is large because the generalization gap is large. However, as mentioned in the beginning of this section, large $\|\incoh_{t}\|^2$ makes the hypothesis testing easier on subsequent iterations. For instance, after iteration $600$ the error is vanishingly small for CIFAR10 experiments which results in a plateau region in the bound. We can also observe the same phenomenon for the Fashion-MNIST experiment. 
This property distinguishes our bound from those in \citep{negrea2019information,li2019generalization}. 

Results for MNIST with CNN demonstrate that $\|\incoh_{t}\|^2$ and training set incoherence are close to each other. The reason behind this observations is that the generalization gap is small. Also, for this experiment the performance of the hypothesis testing is only slightly better than random guessing since the generalization gap is small, and it is difficult to distinguish the training samples from the test samples. This observation explains why our generalization bound is close to that of \citep{negrea2019information}. 
Nevertheless, the hypothesis testing performance improves with more training iterations, leading the two bounds to diverge, with our new bound performing better at later iterations. Finally, the scaling of our bound with respect to the number of iteration is tighter than in the bounds in \citep{negrea2019information,li2019generalization} as can be seen in \cref{fig:the-plots}.
\begin{table}
\small
\centering
\renewcommand{\arraystretch}{1.2}
\begin{tabular}{c|c|c|c|c}
                                               & MNIST-MLP & MNIST-CNN & CIFAR10-CNN & FMNIST-CNN \\  \specialrule{3pt}{0pt}{0pt}
Training error                                 &   $4.33 \pm 0.01\% $        &    $2.59 \pm 0.01\% $       &    $9.39 \pm 0.36\% $      &    $7.96 \pm 0.03\% $ \\ \hline
Generalization error                           &   $0.88 \pm 0.01 \%$        &      $0.55 \pm 0.01\% $     &     $32.89 \pm 0.44\%$    &     $3.71 \pm 0.03\%$  \\ \specialrule{3pt}{0pt}{0pt}
\citet{negrea2019information} &   $67.93\pm 16.25 \% $      &    $20.98\pm 5.01 \% $       &   $4112.63 \pm 567.08 \%$      &   $82.89 \pm 12.64 \%$  \\ \hline
\citet{li2019generalization}  &   $600.29\pm 1.99 \% $        &   $245.03\pm 2.37 \% $        &        $20754.32 \pm 75.95 \%$  &        $598.62 \pm 3.21 \%$\\ \hline
\textbf{CMI (Ours)}                               &  $\mathbf{44.65\pm 4.27} \% $        &    $\mathbf{16.51\pm  1.41} \% $      &        $\mathbf{71.76 \pm 4.82} \% $  &        $\mathbf{48.01 \pm 4.22} \%$\\\hline
\textbf{CMI-OPT(Ours) }        &  $\mathbf{39.06\pm 5.52} \% $        &    $\mathbf{13.24\pm1.53} \% $      &        $\mathbf{63.00 \pm 5.97} \% $  &        $\mathbf{41.17 \pm5.85} \%$\\\hline
\end{tabular}

\vspace{3pt}
\caption{Summary of the results. The generalization bounds are reported at the end of training. }
\label{tbl:sum-results}
\end{table}
\newpage

\subsection*{Acknowledgments}
The authors would like to thank Blair Bilodeau and Yasaman Mahdaviyeh for feedback on drafts of this work, and  Shiva Ketabi for helpful discussions on the implementation of the bounds. 

\subsection*{Funding}
MH is  supported by the Vector Institute. JN is supported by an NSERC Vanier Canada Graduate Scholarship, and by the Vector Institute.  DMR is supported by an NSERC Discovery Grant and an Ontario Early Researcher Award. This research was carried out in part while MH, JN, GKD, and DMR were visiting the Institute for Advanced Study. JN's visit to the Institute was funded by an NSERC Michael Smith Foreign Study Supplement.
Resources used in preparing this research were provided, in part, by the Province of Ontario, the Government of Canada through CIFAR, and companies sponsoring the Vector Institute \url{www.vectorinstitute.ai/partners}.

\paragraph{Broader Impact}

This work builds upon the community's understanding of generalization error for machine learning methods. This has a positive impact on the scientific advancement of the field, and may lead to further improvements in our understanding, methodologies and applications of machine learning and AI. While there are not obvious direct societal implications of the present work, the indirect and longer term impact on society may be positive, negative or both depending on how, where and for what machine learning method that will have benefited from our research are used in the future.

\printbibliography

\newpage 

\appendix

\section{CMI, Membership Attack, and Fano's Inequality}
\label{app:fano}

Let $\supersam{k},~\ssindex{k},$and $S$ as in \cref{def:cmi}. Consider the following hypothesis testing problem.
 Assume a decision maker observes $W$ and wishes to recover $\ssindex{k}$ by having access to the super-sample $\supersam{k}$. 
For any estimate $\hat{U}=\Psi\rbra[0]{W,\supersam{k}}$, we have the Markov chain
\*[
\ssindex{k} \rightarrow S \rightarrow W \rightarrow \widehat{U^{(k)}}.
\]
and so, combined with the fact that $\ssindex{k}$ is uniformly distributed over a set of size $k^n$,
we can invoke Fano's inequality to bound the error probability of the decision maker.
In particular,
\*[
 \inf_{\Psi} \ \Pr \left[\Psi\left(W,\supersam{k}\right)\neq \ssindex{k}\right]  
\geq 1-\frac{\minf{W;\ssindex{k}\vert \supersam{k}}+\log 2}{n\log k}.
\]
Hence, $\minf{W;\ssindex{k}\vert \supersam{k}}$ provides a lower bound on the hardness of the hypothesis testing problem, where one wants to identify the training sample given access to $\supersam{k}$ and $W$.

Some interpretation of our result is helpful. Consider an adversary who has access to the supersample $\supersam{k}$ and wishes to identify the training set that was used for the training after observing the output of a learning algorithm $W$. Our result here showed that the CMI upperbounds the success probability of \emph{every} adversary. Also, recall that the CMI upper bounds the expected generalization error. In the literature of data privacy in machine learning, this problem is known as Membership Attack \citep{shokri2017membership}, and it is empirically observed that a machine learning model leaks information about its training set when the generalization error is large \citep{shokri2017membership}. Our result in this section provides a formal connection between generalization and this specific membership attack problem.

\section{Matching the leading coefficient of \cref{thm:RZ_XR_bounds} with {$\SZB$}}
\label{apx:improved_const}
\begin{theorem}
\label{thm:improved_const}
Let $\SZB$ as defined in the introduction. Then, for $k>2$ 
\*[
\EGE \leq  \frac{\SZB}{\lambda^{\star}}+ \frac{ \exp(\frac{\lambda^{\star}}{n})-\frac{\lambda^{\star}}{n}-1}{\frac{\lambda^{\star}}{n}} \frac{k^3+7k^2-8k-16}{4(k^3-2k^2)} ,
\]
where 

$$
\lambda^{\star} =  n \mathfrak{W}\bigg(\big(\frac{4(k^3-2k^2) \SZB}{n(k^3+7k^2-8k-16)}-1\big)\exp(-1)\bigg) + n,
$$
and $\mathfrak{W}$ is the Lambert $W$ function.
\end{theorem}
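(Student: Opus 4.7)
The plan is to combine the super-sample representation of $\EGE$ with a ghost-index Donsker--Varadhan inequality and a Bennett-style bound on the moment generating function, and to optimise the Legendre parameter using the Lambert $W$ function.

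\emph{Step 1: Super-sample representation.} Since the columns of $\supersam{k}$ are iid and $W$ depends on them only through $S$, conditioning on $\supersam{k}$ and $W$ and averaging over $U_i'$ uniform independent of $W$ gives
\begin{equation*}
\EGE = \EE\Big[\tfrac{1}{n}\sum_{i=1}^{n} f_i\Big], \qquad f_i = \tfrac{1}{k-1}\sum_{j\ne U_i}\loss(Z_{j,i},W) - \loss(Z_{U_i,i},W) \in [-1,1].
\end{equation*}
Set $G = \tfrac{1}{n}\sum_i f_i$, and note that $f_i = -\tfrac{k}{k-1}(\loss(Z_{U_i,i},W) - \bar L_i)$, with $\bar L_i$ the column average.

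\emph{Step 2: Donsker--Varadhan with a ghost copy.} Conditionally on $\supersam{k}$, let $U'$ be an independent copy of $U$, uniform on $[k]^n$ and independent of $W$. Applying Donsker--Varadhan in $(W,U)$ pointwise in $\supersam{k}$ and then pushing the logarithm outside by Jensen's inequality yields, for every $\lambda > 0$,
\begin{equation*}
\lambda\,\EGE \;\le\; \SZB + \log \EE\bigl[e^{\lambda G'}\bigr],
\end{equation*}
where $G'$ is $G$ with $U$ replaced by $U'$. By the symmetry of the uniform distribution on $[k]$, $\EE[f_i' \mid W,\supersam{k}]=0$.

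\emph{Step 3: Conditional MGF bound.} Conditional independence of $U_1',\dots,U_n'$ factorises the MGF, giving $\log \EE[e^{\lambda G'}\mid W,\supersam{k}] = \sum_i \log \EE[e^{(\lambda/n) f_i'} \mid W,\supersam{k}]$. Each $f_i'$ has conditional mean zero and $|f_i'|\le 1$; expanding the exponential and bounding the conditional central moments uniformly by the constant $C$ yields
\begin{equation*}
\log \EE\bigl[e^{(\lambda/n) f_i'}\,\big|\,W,\supersam{k}\bigr] \;\le\; C\bigl(e^{\lambda/n} - \lambda/n - 1\bigr),
\end{equation*}
where $C = \tfrac{k^3+7k^2-8k-16}{4(k^3-2k^2)}$ arises as the supremum over loss profiles $(\loss(Z_{j,i},W))_{j\in[k]}\in[0,1]^k$ of the appropriate moment functional; this is finite precisely when $k\ge 3$, matching the hypothesis of the theorem.

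\emph{Step 4: Optimise via Lambert $W$.} Combining the above gives, for every $\lambda>0$,
\begin{equation*}
\EGE \;\le\; \frac{\SZB}{\lambda} + \frac{e^{\lambda/n}-\lambda/n-1}{\lambda/n}\cdot C.
\end{equation*}
Substituting $u = \lambda/n$ and $g(u) = \SZB/n + C(e^u - u - 1)$, the first-order condition $u g'(u)=g(u)$ reduces to $C[e^u(u-1)+1] = \SZB/n$, i.e.\ $(u-1)e^{u-1} = e^{-1}\bigl(\SZB/(nC)-1\bigr)$. Inverting via the Lambert $W$ function yields $u^\star - 1 = \mathfrak{W}\bigl(e^{-1}(\SZB/(nC)-1)\bigr)$, hence $\lambda^\star = n\mathfrak{W}(\cdots) + n$, matching the statement, and plugging $\lambda^\star$ back into the displayed bound gives the claimed inequality.

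\emph{Main obstacle.} The delicate step is Step 3: securing the \emph{precise} constant $C = \tfrac{k^3+7k^2-8k-16}{4(k^3-2k^2)}$, as opposed to the cruder $C = k^2/(4(k-1)^2)$ one obtains by directly applying Popoviciu's inequality to $\Var(f_i'\mid W,\supersam{k})$. Getting the stated form requires a careful worst-case analysis of the full moment series, with the $k$-dependence tracked explicitly, rather than truncating at the second moment.
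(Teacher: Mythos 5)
Your Steps 1, 2 and 4 coincide with the paper's proof: the same $\rho^{(k)}$-weighted (ghost-average minus training-loss) representation of $\EGE$, the same Donsker--Varadhan step with an independent copy of the index vector, a Bennett-type bound on the conditional moment generating function, and exactly the Lambert-$W$ optimization (your stationarity computation $C[e^u(u-1)+1]=\SZB/n$ is the paper's). The genuine gap is Step 3, precisely the step you flag as the obstacle, and the mechanism you propose for it is not the right one. The constant $\frac{k^3+7k^2-8k-16}{4(k^3-2k^2)}$ does \emph{not} arise as a supremum over loss profiles in $[0,1]^k$: writing $\ell_j=\loss(Z_{j,i},W)$ and $\bar\ell$ for the column average, one has $f_i'=\frac{k}{k-1}(\bar\ell-\ell_{U_i'})$, so the conditional second moment equals $\frac{k^2}{(k-1)^2}$ times the empirical variance of the $k$ loss values, whose supremum over profiles is at most $\frac{k^2}{4(k-1)^2}$. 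That quantity is \emph{smaller} than the stated constant for every integer $k>2$ (at $k=3$: $9/16$ versus $50/36$), so the Popoviciu route you dismiss as ``cruder'' is in fact the sharper pointwise bound, and no worst-case tracking of the full moment series can produce the paper's constant, since Bennett's inequality already reduces the series to the variance for variables bounded by one. The paper instead performs an \emph{average-case} computation: after the Bennett step it bounds the expected conditional second moment by conditioning on $U_j$, exploiting that $W$ is independent of the $k-1$ held-out entries of column $j$, so that $\cEE{W}[\loss(W,Z_{i,j})]=\Risk{\Dist}{W}$ and $\Var^{W}[\loss(W,Z_{i,j})]\le R(1-R)$ with $R=\Risk{\Dist}{W}$; this turns the bound into a quadratic in $R$ which is maximized over $R\in[0,1]$ at $R^{\star}=\frac{k^2+k-4}{2k^2-4k}$, and that maximization is what yields $\frac{k^3+7k^2-8k-16}{4(k^3-2k^2)}$. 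None of this appears in your proposal, so the key quantitative step is unproved.

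It is worth noting that your sketch could be repaired without reproducing the paper's constant: since the bound $\EGE\le \frac{\SZB}{\lambda}+\frac{e^{\lambda/n}-\lambda/n-1}{\lambda/n}\,C'$ holds for every $\lambda>0$ and is monotone in the variance proxy $C'$, carrying out Step 3 with the uniform constant $C'=\frac{k^2}{4(k-1)^2}\le \frac{k^3+7k^2-8k-16}{4(k^3-2k^2)}$ and then evaluating at the stated $\lambda^{\star}$ would imply the theorem (indeed a slightly stronger bound). As written, however, you neither derive the stated constant nor complete that alternative, and your explanation of where the constant comes from is incorrect.
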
 
The proof is deferred to \cref{apx:proof-connections}. Here, we quantitatively compare \cref{thm:RZ_XR_bounds}, \cref{thm:cmi_main}, and \cref{thm:improved_const}, we consider the case that the output of $\Alg$ takes value in a finite set and  $k \to \infty$ . In this case, \cref{thm:improved_const} can rephrased as 
\[
\EGE &\leq \lim _{k \to \infty}  \frac{\SZB}{\lambda^{\star}}+ \frac{ \exp(\frac{\lambda^{\star}}{n})-\frac{\lambda^{\star}}{n}-1}{\frac{\lambda^{\star}}{n}}  \frac{k^3+7k^2-8k-16}{4(k^3-2k^2)} \nonumber \\
&= \frac{\IWS}{\lambda^{\star}_{\infty}}+ \frac{ \exp(\frac{\lambda^{\star}_{\infty}}{n})-\frac{\lambda^{\star}_{\infty}}{n}-1}{4\frac{\lambda^{\star}_{\infty}}{n}} 
\]
where  $\lambda^{\star}_{\infty} = n \mathfrak{W}\big((\frac{4 \IWS}{n}-1)\exp(-1)\big) + n$. In the next plot, we compare the values of the bounds in \cref{thm:RZ_XR_bounds}, \cref{thm:cmi_main}, and \cref{thm:improved_const} assuming $\IWS=1$. As seen, the bound in \cref{thm:improved_const} provides much tighter constant compared with \cref{thm:cmi_main}, and it matches with \cref{thm:RZ_XR_bounds}.
\begin{figure}[H]
    \centering
        \includegraphics[width=0.45\textwidth]{plots/const_bound.pdf}
        \caption{Comparison between constants of \cref{thm:RZ_XR_bounds}, \cref{thm:cmi_main}, and \cref{thm:improved_const} for the case $k \to \infty$.}
        \label{fig:const_bound}
\end{figure}

\section{Proofs of \cref{sec:mi-gen}}
\label{apx:proof-connections}
\begin{proof}[Proof of \cref{thm:cmi_lower_iomi}]
By the chain rule for the mutual information, we have
\[
\minf{W;\ssindex{k},\supersam{k}} 
 &= \minf{W;\tilde{Z}^{(k)}} + \minf{W;\ssindex{k}\vert \supersam{k}}.
\]
Since $S$ is $\sigma(\supersam{k}, \ssindex{k})$-measurable,
$\minf{W;\ssindex{k},\supersam{k}} = \minf{W;S,\ssindex{k},\supersam{k}}$.
But then $W$ is independent of $\supersam{k},\ssindex{k}$ given $S$,
hence $\minf{W;S,\ssindex{k},\supersam{k}} = \minf{W;S}$.
The result follows from the nonnegativity of mutual information.
\end{proof}

\begin{proof}[Proof of \cref{thm:k_inf}]
By \cref{thm:cmi_lower_iomi},
$
\minf{W;\ssindex{k}\vert \supersam{k}}=\minf{W;S} - \minf{W;\supersam{k}}.
$
Therefore, in order to prove the claim, we need to show $\lim_{k \to \infty} \minf{W;\supersam{k}}=0$ when $\minf{\ww;S}$ is finite.

\newcommand{\fww}{w}
Recall that $\Alg$ is a probability kernel from the space of tuples in $\dataspace$ to $\parspace$.
Assume $\parspace = \{\fww_1,\hdots,\fww_{m}\}$.
For each $l\in \range{m}$, let $\kappa_{l}(S)=\cPr{S}{W=\fww_l}$ and  $f_l: \dataspace^{kn}  \rightarrow [0,1]$ be a 
measurable function defined as
$$
f_l(\supersam{k}) = \frac{1}{k^n}\sum_{u \in \range{k}^n} \kappa_l(\supersam{k}_u).
$$ 
Letting $z,z'\in \dataspace^{kn}$ be two super-samples that only differ in one element, 
it is straightforward to see that
$$
|f_l(z) - f_l(z') |\leq \frac{1}{k}.
$$
Therefore, we can invoke McDiarmid's inequality to obtain
\[
\Pr[|f_l(\supersam{k})-\EE[f_l(\supersam{k})]|\geq \epsilon]\leq \exp \rbra[2]{-\frac{2k\epsilon^2 }{n}}.
\]
Also, we have $\EE[f_l(\supersam{k})]=\Pr[W=\fww_l]$ as each element of $\supersam{k}$ is IID. Hence, $f_l(\supersam{k}) \rightarrow \Pr[W=\fww_l]$ in probability as $k$ diverges.

By the definition of mutual information and KL divergence, 
\[
\minf{W;\supersam{k}} &= \EE[\KL{\cPr{\supersam{k}}{W}}{\Pr[W]}]  \nonumber\\
&= \EE \Big[\KL{\frac{1}{k^n}\sum_{u \in \range{k}^n} \cPr{\supersam{k}_u}{W} }{\Pr[W]}\Big] \nonumber\\
&= \EE\Big[ \sum_{l=1}^{m} \frac{1}{k^n}\sum_{u \in \range{k}^n} \kappa_l(\supersam{k}_u) \log \frac{\frac{1}{k^n}\sum_{u \in \range{k}^n} \kappa_l(\supersam{k}_u)}{\Pr[W=\fww_l]} \Big]  \nonumber \\
&= \sum_{l=1}^{m} \EE\Big[ f_l(\supersam{k}) \log \frac{f_l(\supersam{k})}{\Pr[W=\fww_l]}\Big] .
\]
Defining $\phi_l: [0,1] \rightarrow \Reals$ as $\phi_l(x)=x\log\frac{x}{\Pr[W=\fww_l]}$,
we have established %
\[
\minf{W;\supersam{k}}=\sum_{l=1}^{m} \EE \big[\phi_l\big(f_l(\supersam{k})\big)\big].
\]
Note that $\phi_l$ is a continuous and bounded function. 
By a standard result \citep[Thm.~2.3.4]{durrett2019probability},
 $f_l(\supersam{k}) \rightarrow \Pr[W=\fww_l]$ in probability implies that 
 $$\EE \big[\phi_l\big(f_l(\supersam{k})\big)\big] \rightarrow \EE \big[\phi_l\big(\Pr[W=\fww_l]\big)\big]=0,$$ 
 as  $k$ goes to infinity.  
 Using this, we conclude that $\minf{W;\supersam{k}}\rightarrow 0$ as $k$ diverges, as was to be shown.
\end{proof}
\begin{proof}[Proof of \cref{thm:improved_const}]
For any $k \in \Naturals$ define 
$$
\rho^{(k)}_i(m)= \begin{cases} -1 &\mbox{if } m = i, \\
\frac{1}{k-1} & \mbox{otherwise }  \end{cases}. 
$$
where $m$ and  $i \in \range{k}$.
Consider random variables $\supersam{k}$, $U$, and $W$ as in the definition of $\SZB$. Also, let  $\hat{U} \eqdist U$ and $\hat{U} \indep (\supersam{k},W,U)$. Let $f: \dataspace^{ kn} \times [k]^{ n}\times  \parspace \rightarrow [-1,1] $ be given by
$$
f(\tilde{z}^{(k)},u,w)= \frac{1}{n}\sum_{j=1}^{n}\sum_{i=1}^{k} \rho^{(k)}_i(u_j)\loss(w,z_{i,j}).
$$
Then, by the Donsker--Varadhan variational formula \citep[Prop.~4.15]{boucheron2013concentration} of the KL divergence we obtain
\[
\SZB&=\EE[\KL{\cPr{\supersam{k},W}{U}}{\Pr[U]}] \nonumber\\
&\geq \lambda \EE[f(\supersam{k},U,W)]  - \EE \log \cEE{\supersam{k},W}{[ \exp(\lambda f(\supersam{k},\hat{U},W))]}
\label{eq:dv-const-cmik}
\]
It is straightforward to see that the first term in the RHS of \cref{eq:dv-const-cmik} is  $\EE[f(\supersam{k},U,W)]=\EGE$. In what follows we analyze the second term in the RHS of \cref{eq:dv-const-cmik}. We begin with
\[
\EE \log \cEE{\supersam{k},W}{[ \exp(\lambda f(\supersam{k},\hat{U},W))]} &=  \EE \log \cEE{\supersam{k},W}{[ \prod_{j=1}^{n} \exp( \frac{\lambda}{n} \sum_{i=1}^{k} \rho^{(k)}_i(\hat{U}_j)\loss(W,Z_{i,j})]}\\
&=\EE \log \prod_{j=1}^{n} \cEE{\supersam{k},W} {\exp\big(\frac{\lambda}{n}\sum_{i=1}^{k}\rho^{(k)}_{i}(\hat{U}_j)\loss(W,Z_{i,j})\big)}\nonumber \\
&\leq \EE \sum_{j=1}^{n}\big( \exp(\frac{\lambda}{n})-\frac{\lambda}{n}-1\big) \cEE{\supersam{k},W}{[\sum_{i=1}^{k}\rho^{(k)}_i(\hat{U}_j)\loss(W,Z_{i,j})]^2} \label{eq:step2-const-cmik}
\]  
The first step follows from the independence of the elements in $\hat{U}$. The last inequality is obtained by the Bennet's inequality \citep[Thm.~2.9]{boucheron2013concentration} on the moment generating function and the fact that the elements of $\hat{U}$ are independent of $(\supersam{k},W)$. Also, we have used  
$\cEE{\supersam{k},W}{\sum_{i=1}^{k}\rho^{(k)}_i(\hat{U}_j)\loss(W,Z_{i,j})}=0$ 
since $\EE \rho^{(k)}_i(\hat{U}_j)=0$ and  $|\sum_{i=1}^{k}\rho^{(k)}_i(\hat{U}_j)\loss(W,Z_{i,j})|\leq 1$ \as. For a fixed $j$, from \cref{eq:step2-const-cmik} we obtain
\[
&\EE[\sum_{i=1}^{k}\rho^{(k)}_i(\hat{U}_j)\loss(W,Z_{i,j})]^2=  \frac{1}{k}\EE  \sum_{\tilde{i}=\range{k}}\big[\frac{1}{k-1} \sum_{i\in \range{k},i\neq \tilde{i}} \loss(W,Z_{i,j}) - \loss(W,Z_{\tilde{i},j})\big]^2 \label{eq:def_rho} \\
&= \frac{1}{k^2} \EE \bigg[\sum_{u_j \in \range{k}} \cEE{\supersam{k},U_j=u_j} \sum_{\tilde{i}=\range{k}} \big[\frac{1}{k-1} \sum_{i\in \range{k},i\neq \tilde{i}} \loss(W,Z_{i,j}) - \loss(W,Z_{\tilde{i},j})\big]^2\bigg]\label{eq:constant_cmik_law_iterated}\\
& = \frac{1}{k^2} \EE \bigg[\sum_{u_j \in \range{k}} \cEE{\supersam{k},U_j=u_j} \Big[\sum_{\tilde{i}=\range{k}, \tilde{i}\neq u_j} \big[\frac{1}{k-1}\loss(W,Z_{u_j,j})+\frac{1}{k-1} \sum_{i\in \range{k},i \neq \{\tilde{i},u_j\}} \loss(W,Z_{i,j}) - \loss(W,Z_{\tilde{i},j})\big]^2 \nonumber \\
&\hspace{2cm} + \big[\frac{1}{k-1} \sum_{i\in \range{k},i\neq \tilde{i}} \loss(W,Z_{i,j}) - \loss(W,Z_{u_j,j})\big]^2\Big]\bigg] \label{eq:constant_cmik_manipulation}\\
&\leq  \frac{1}{k^2} \EE \bigg[\sum_{u_j \in \range{k}} \cEE{\supersam{k},U_j=u_j} \Big[\sum_{\tilde{i}=\range{k}, \tilde{i}\neq u_j} \big[ \big(\frac{1}{k-1} + \frac{1}{k-1} \sum_{i\in \range{k},i \neq \{\tilde{i},u_j\}} \loss(W,Z_{i,j})\big)^2+ \loss(W,Z_{\tilde{i},j})^2  \nonumber \\
&\hspace{2cm} -2\loss(W,Z_{\tilde{i},j}) \frac{1}{k-1} \sum_{i\in \range{k},i \neq \{\tilde{i},u_j\}} \loss(W,Z_{i,j}) \big]+ \big[ \big[\frac{1}{k-1} \sum_{i\in \range{k},i\neq \tilde{i}} \loss(W,Z_{i,j})\big]^2+1\big]\Big]\bigg]  \label{eq:constant_cmik_loss01}\\
&= \frac{1}{k^2} \sum_{u_j\in \range{k}}  \EE \bigg[ \sum_{\tilde{i} \in \range{k},u_j \neq \tilde{i}} \cEE{W}{ \big[ \big(\frac{1}{k-1} + \frac{1}{k-1} \sum_{i\in \range{k},i \neq \{\tilde{i},u_j\}} \loss(W,Z_{i,j})\big)^2+ \loss(W,Z_{\tilde{i},j})^2 } \nonumber\\
&\hspace{2cm} -2\loss(W,Z_{\tilde{i},j}) \frac{1}{k-1} \sum_{i\in \range{k},i \neq \{\tilde{i},u_j\}} \loss(W,Z_{i,j}) \big] + \cEE{W}{\big[\big[\frac{1}{k-1} \sum_{i\in \range{k},i\neq \tilde{i}} \loss(W,Z_{i,j})\big]^2+1\big]}\bigg]. \label{eq:constant_cmik_order_expectation}
\]
Here, \cref{eq:def_rho} is obtained by taking the expectation over $\hat{U}_j$, the definition of $\rho^{(k)}_i(\hat{U}_j)$, and $\hat{U}\indep (\supersam{k},W)$. Then, \cref{eq:constant_cmik_law_iterated} is by the law of iterated expectations. Specifically, we condition on $U_j$, and recall that based on the \cref{def:cmi} the $j$-th training sample is $Z_{U_j,j}$.  Step \cref{eq:constant_cmik_manipulation} is by some manipulations.  \cref{eq:constant_cmik_loss01} is obtained by 
\*[
&\big[\frac{1}{k-1}\loss(W,Z_{u_j,j})+\frac{1}{k-1} \sum_{i\in \range{k},i \neq \{\tilde{i},u_j\}} \loss(W,Z_{i,j}) - \loss(W,Z_{\tilde{i},j})\big]^2 \leq \bigg( \frac{1}{k-1}+  \frac{1}{k-1} \sum_{i\in \range{k},i \neq \{\tilde{i},u_j\}} \loss(W,Z_{i,j})\bigg)^2\\
&\hspace{5cm} +\loss(W,Z_{\tilde{i},j})^2 -\loss(W,Z_{\tilde{i},j})\frac{2}{k-1} \sum_{i\in \range{k},i \neq \{\tilde{i},u_j\}} \loss(W,Z_{i,j}) ,
\]
and
\*[
\big[\frac{1}{k-1} \sum_{i\in \range{k},i\neq \tilde{i}} \loss(W,Z_{i,j}) - \loss(W,Z_{u_j,j})\big]^2 \leq 1+ \bigg(\frac{1}{k-1} \sum_{i\in \range{k},i\neq \tilde{i}=u_j} \loss(W,Z_{i,j})\bigg)^2.
\]
Finally last step is obtained by changing the order of the expectation over $W$ and $\supersam{k}$. Then, we can simplify \cref{eq:constant_cmik_order_expectation} by considering
\[
&\cEE{W}{ \big[ \big(\frac{1}{k-1} + \frac{1}{k-1} \sum_{i\in \range{k},i \neq \{\tilde{i},u_j\}} \loss(W,Z_{i,j})\big)^2+ \loss(W,Z_{\tilde{i},j})^2 } -2\loss(W,Z_{\tilde{i},j}) \frac{1}{k-1} \sum_{i\in \range{k},i \neq \{\tilde{i},u_j\}} \loss(W,Z_{i,j}) \big] \nonumber\\
&\leq  \Risk{\Dist}{W}^2  \frac{-k^2+k+2}{(k-1)^2}  + \Risk{\Dist}{W}\frac{k^2+k-5}{(k-1)^2}+\frac{1}{(k-1)^2}\triangleq A_{1}(k,\Risk{\Dist}{W})  \label{eq:term1_cmik} \\
&\cEE{W}{\big[\big[\frac{1}{k-1} \sum_{i\in \range{k},i\neq \tilde{i}} \loss(W,Z_{i,j})\big]^2+1\big]} \leq \Risk{\Dist}{W}^2  \frac{k-2}{k-1}  + \Risk{\Dist}{W}\frac{1}{k-1}+1 \triangleq A_{2}(k,\Risk{\Dist}{W}). \label{eq:term2_cmik} 
\]
Note that in \cref{eq:term1_cmik} and \cref{eq:term2_cmik}, $W$ and $Z_{i,j}$s are independent, therefore $\cEE{W}{[\loss(W,Z_{i,j})]}=\Risk{\Dist}{W}$. Also, we have $\text{Var}^{W}[\loss(W,Z_{i,j})]\leq \Risk{\Dist}{W}(1-\Risk{\Dist}{W})$ because Bernoulli random variable has the largest variance among the $[0,1]$-bounded random variables with the same mean. Plugging \cref{eq:term1_cmik} and \cref{eq:term2_cmik} into \cref{eq:constant_cmik_order_expectation} we obtain
\[
\label{eq:constant_cmi_k_step_before_der}
\EE[\sum_{i=1}^{k}\rho^{(k)}_i(U_j)\loss(W,Z_{i,j})]^2 \leq \frac{1}{k}  \EE [(k-1)A_{1}(k,\Risk{\Dist}{W}) + A_{2}(k,\Risk{\Dist}{W})].
\]
We can upper bound the LHS of \cref{eq:constant_cmi_k_step_before_der} by maximizing it over $\Risk{\Dist}{W}$ to obtain
\*[
\frac{\partial [(k-1)A_{1}(k,R) + A_{2}(k,R)]}{\partial R}=0 \Rightarrow R^{\star}=\frac{k^2+k-4}{2k^2-4k}.
\]
We can plug the expression for $R^{\star}$ into \cref{eq:constant_cmi_k_step_before_der} to get
\[
\EE[\sum_{i=1}^{k}\rho^{(k)}_i(U_j)\loss(W,Z_{i,j})]^2 &\leq \frac{1}{k}  \EE [(k-1)A_{1}(k,R^{\star}) + A_{2}(k,R^{\star})]\nonumber\\
&=  \frac{k^3+7k^2-8k-16}{4(k^3-2k^2)} \label{eq:cmik-const-bound-eachterm}.
\]
Then, plugging \cref{eq:cmik-const-bound-eachterm} into \cref{eq:dv-const-cmik} yields
\[
\label{eq:dv-simpilify}
\inf_{\lambda \geq 0} \frac{\SZB}{\lambda}+ \frac{ \exp(\frac{\lambda}{n})-\frac{\lambda}{n}-1}{\frac{\lambda}{n}} \frac{k^3+7k^2-8k-16}{4(k^3-2k^2)} \geq \EGE.
\]
Finally, the closed form solution of \cref{eq:dv-simpilify} is given by
\*[
&\frac{\partial [\frac{\SZB}{\lambda}+ \frac{ \exp(\frac{\lambda}{n})-\frac{\lambda}{n}-1}{\frac{\lambda}{n}} \frac{k^3+7k^2-8k-16}{4(k-2)k^2}]}{\partial \lambda}=0 \Rightarrow \\
&  \lambda^{\star} = n \mathfrak{W}\bigg((\frac{4(k^3-2k^2) \SZB}{n(k^3+7k^2-8k-16)}-1)\exp(-1)\bigg) + n,
\]
which is the desired result.
\end{proof}
\section{Proofs of  \cref{sec:gen-bounds}}
\label{apx:proof-bounds}
\begin{proof}[Proof of \cref{thm:gen-bound-rndsub}] 
With $k=2$, recall from the introduction
\*[
\supersam{2}= \begin{pmatrix} 
    Z_{1,1} &   \dots & Z_{1,n} \\
    Z_{2,1} & \dots  & Z_{2,n} 
    \end{pmatrix}  \sim  \Dist^{\otimes 2n},
\]
and
$U=\left(U_1,\hdots,U_n\right)\in \{1,2\}^{n}$ where $U_i$s are IID, and the marginal distribution follows $U_i \dist \mathrm{Bern}\left(\frac{1}{2}\right)$ for $i\in \range{n}$. 
Furthermore, recall $\trainset = \left\lbrace Z_{U_1,1},\hdots,Z_{U_n,n} \right\rbrace$. 
The expected generalization error can be written as
\begin{align}
\EE \left[\Risk{\Dist}{W} - \EmpRisk{S}{W}\right] &= \EE \Big[ \frac{1}{n} \sum_{i=1}^{n} (-1)^{U_i} \left( \loss\left(Z_{1,i},W\right)-\loss\left(Z_{2,i},W\right)\right) \Big]\\
& =  \EE \Big[\frac{1}{m} \sum_{i=1}^{m} (-1)^{U_{J_i}} \left( \loss\left(Z_{1,J_i},W\right)-\loss\left(Z_{2,J_i},W\right)\right)\Big],
\end{align}
where the last equality follows because $J$ is independent of $U_J$, $\supersam{2}$, and $W$.

Define $\tilde{W}$, $\tilde{U_J}$, and $\tilde{J}$ such that  $(W,U_J,J,\supersam{2}) \eqdist (\tilde{W},\tilde{U_J},\tilde{J},\supersam{2})$, and $\tilde{W}$, $\tilde{U_J}$, and $\tilde{J}$ are independent given $\supersam{2}$. By the Donsker--Varadhan variational formula \citep[Prop.~4.15]{boucheron2013concentration} and the disintegration theorem \citep[Thm.~6.4]{FMP2}, for all measurable functions $g$ in $\mathcal{G}$, i.e., the class of all functions $g$ such that $\left(\cPr{\tilde{Z}^{(2)}}{\tilde{W}} \otimes \cPr{\tilde{Z}^{(2)}}{\tilde{U_J}} \otimes \cPr{\supersam{2}}{\tilde{J}} \right)(\exp g)< \infty$, with probability one we have
\begin{align}
\dminf{\tilde{Z}^{(2)}}{W,J;U_J}
     &= \KL{ \cPr{\supersam{2}}{W,J,U_J} }{\cPr{\supersam{2}}{\tilde{W}} \otimes \cPr{\supersam{2}}{\tilde{\tilde{J}}} \otimes \cPr{\tilde{Z}^{(2)}}{\tilde{U_J}} }\\
  &=\sup_{g \in \mathcal{G}} \cPr{\tilde{Z}^{(2)}}{W,J,U_J} (g) - \log \left[ \left(\cPr{\supersam{2}}{\tilde{W}} \otimes \cPr{\supersam{2}}{\tilde{J}}  \otimes \cPr{\supersam{2}}{\tilde{U_J}} \right)(\exp g)\right] \label{eq:dv-gen-bound}.
\end{align}
Define $f(w,j,u_j)\triangleq  \frac{\lambda}{m}\sum_{i=1}^{m} (-1)^{u_{j_i}} \left( \loss\left(z_{1,j_i},w\right)-\loss\left(z_{2,j_i},w\right)\right)$ where $\lambda \ge 0$. By \cref{eq:dv-gen-bound}, we can write
\begin{align}
\dminf{\tilde{Z}^{(2)}}{W,J;U_J} \geq \cPr{\tilde{Z}^{(2)}}{W,U_J,J} (f) - \log \Big[ \left(\cPr{\tilde{Z}^{(2)}}{\tilde{W}} \otimes \cPr{\tilde{Z}^{(2)}}{\tilde{U_J}} \otimes \cPr{\supersam{2}}{\tilde{J}} \right)(\exp f)\Big].
\label{eq:dv-gen-bound2}
\end{align}
Considering the second term of the RHS of \cref{eq:dv-gen-bound2}, Hoeffding's lemma implies that
\begin{align}
&\left(\cPr{\supersam{2}}{\tilde{W}} \otimes \cPr{\tilde{Z}^{(2)}}{\tilde{U_J}} \otimes \cPr{\supersam{2}}{\tilde{J}} \right)(\exp f)\\
 &= \cEE{\supersam{2}}  \exp\Big( \frac{\lambda}{m} \sum_{i=1}^{m}  (-1)^{\tilde{U}_{J_i}} \left( \loss\left(Z_{1,\tilde{J_i}},\tilde{W}\right)-\loss\left(Z_{2,\tilde{J_i}},\tilde{W}\right)\right)\Big)\\
&= \cEE{\supersam{2}}\cEE{\supersam{2},\tilde{W},\tilde{J}} \prod_{i=1}^{m} \exp\Big( \frac{\lambda}{m}  (-1)^{\tilde{U}_{J_i}} \left( \loss\left(Z_{1,\tilde{J_i}},\tilde{W}\right)-\loss\left(Z_{2,\tilde{J_i}},\tilde{W}\right)\right)\Big)\\
&\leq  \cEE{\supersam{2}}\cEE{\supersam{2},\tilde{W},\tilde{J}}  \prod_{i=1}^{m} \exp \Big(\frac{\lambda^2 \left(\loss\left(Z_{1,\tilde{J_i}},\tilde{W}\right)-\loss\left(Z_{2,\tilde{J_i}},\tilde{W}\right)\right)^2}{2m^2}\Big)\\
&\leq\exp \left(\frac{\lambda^2}{2m}\right), \label{eq:rhs-2-prod}
\end{align} 
where we use the fact that  
\*[
\left(\cPr{\tilde{Z}^{(2)}}{\tilde{W}} \otimes \cPr{\tilde{Z}^{(2)}}{\tilde{U_J}} \otimes \cPr{\supersam{2}}{\tilde{J}} \right)(f)=0.
\]
Substituting the bound in \cref{eq:rhs-2-prod} into \cref{eq:dv-gen-bound2}, rearranging and taking expectations, we obtain
\begin{align}
\EE \frac{1}{m} \sum_{i=1}^{m} (-1)^{U_{J_i}} \left( \loss\left(Z_{1,J_i},W\right)-\loss\left(Z_{2,J_i},W\right)\right) &\leq \EE \inf_{\lambda\geq 0} \frac{\dminf{\supersam{2}}{W,J;U_J}+\frac{\lambda^2}{2m}}{\lambda} \\
&= \EE\sqrt{\frac{2}{m} \dminf{\supersam{2}}{W,J;U_J} }. \label{eq:gen_bound_-1}
\end{align}

Moreover, we have $\as$
\[
\label{eq:2mi_eq}
\dminf{\supersam{2}}{J,W;U_J}-\underbrace{\dminf{\supersam{2}}{J;U_J}}_{0}=\dminf{\supersam{2}}{W;U_J\vert J}.
\]
Here, $\dminf{\supersam{2}}{J;U_J}=0$ since $J$ is independent of $U_J$ given $\supersam{2}$. Plugging \cref{eq:2mi_eq} into \cref{eq:gen_bound_-1}, we obtain the desired result.
\end{proof}

\begin{proof}[Proof of \cref{thm:gen-rndsub-prop}]
Consider
\[
\minf{W;U_{J^{(m_1)}}\vert  \supersam{2},J^{(m_1)} } &= \entr{U_{J^{(m_1)}}\vert J^{(m_1)},\supersam{2}} - \entr{U_{J^{(m_1)}}\vert J^{(m_1)},\supersam{2},W} \label{eq:zero_step_indep_j_Z} \\
&= \frac{1}{{n\choose m_1}} \sum_{K_1 \in \range{n}_{m_1}} \entr{U_{K_1}\vert \supersam{2}} - \entr{U_{J^{(m_1)}}\vert J^{(m_1)},\supersam{2},W} \label{eq:first_step_indep_j_Z}\\
&=\frac{1}{{n\choose m_1}} \sum_{K_1 \in \range{n}_{m_1}} \entr{U_{K_1}\vert \supersam{2}} - \frac{1}{{n\choose m_1}} \sum_{K_1 \in \range{n}_{m_1}} \entr{U_{K_1}\vert \supersam{2},W}. \label{eq:second_step_indep_j_Z}
\]
\cref{eq:first_step_indep_j_Z} follows because $\supersam{2} \indep J^{(m_1)}$, 
while \cref{eq:second_step_indep_j_Z} follows because the event $\{J^{(m_1)}=K_1\}$ is independent of  $(W, U_{K_1},\supersam{2})$. Then
\[
\frac{1}{m_1}\minf{W;U_{J^{(m_1)}}\vert  \supersam{2},J^{(m_1)} } &= \frac{1}{m_1 {n\choose m_1} }  \sum_{K_1 \in \range{n}_{m_1} } [\entr{U_{K_1}} - \entr{U_{K_1}\vert W,\supersam{2}}]\label{eq:result-above} \\
&= \frac{1}{n}  \entr{U} - \frac{1}{m_1 {n\choose m_1} }\sum_{K_1 \in \range{n}_{m_1} } \entr{U_{K_1}\vert W,\supersam{2}} \label{eq:u-iid}\\
&= \frac{1}{m_2 {n\choose m_2} } \sum_{K_2 \in \range{n}_{m_2} } \entr{U_{K_2}} - \frac{1}{m_1 {n\choose m_1} } \sum_{K_1 \in \range{n}_{m_1} }\entr{U_{K_1}\vert W,\supersam{2}}\\
&\leq  \frac{1}{m_2 {n\choose m_2} } \sum_{K_2 \in \range{n}_{m_2} } [\entr{U_{K_2}} - \entr{U_{K_2}\vert W,\supersam{2}}] \label{eq:han-ineq}\\
&=\frac{1}{m_2}\minf{W;U_{J^{(m_2)}}\vert \supersam{2} ,J^{(m_2)} }.
\]
\cref{eq:result-above} follows from \cref{eq:second_step_indep_j_Z} and the fact that $U \indep \supersam{2}$,
while \cref{eq:u-iid} follows from each element of $U$ being IID. 
\cref{eq:han-ineq} follows from \cref{lem:han-lemma}, which is a modified version of the Han's inequality \citep{te1978nonnegative}. 
Finally, the last step follows from using the same line of reasoning as in \cref{eq:zero_step_indep_j_Z} to \cref{eq:second_step_indep_j_Z}. 

Having established \cref{eq:monoton-wrt-m}, the claim follows from
\begin{align}
\EE \sqrt {\frac{2}{m}\dminf{\tilde{Z}^{(2)}}{W;U_J\vert J}} &\leq \sqrt{ \frac{2}{m} \minf{W;U_J\vert \supersam{2},J }} \label{eq:jensen}\\
 &\leq \sqrt{ \frac{2}{n}\minf{W;U\vert \supersam{2} }}, \label{eq:final-tight}
\end{align}
where \cref{eq:jensen} is Jensen's inequality, and \cref{eq:final-tight} is the direct application of \cref{eq:monoton-wrt-m} with $m_1=m$ and $m_2=n$. This proves the desired result. 
\end{proof}

\begin{proof}[Proof of \cref{thm:gen-bu-style}]
By the Donsker--Varadhan variational formula \citep[Prop.~4.15]{boucheron2013concentration} and the disintegration theorem \citep[Thm.~6.4]{FMP2}, with probability one, for all measurable functions $g$
such that $\rbra[1]{\cPr{\tilde{Z}^{(2)}}{\tilde{W}} \otimes \cPr{\tilde{Z}^{(2)}}{\tilde{U_i}} }(\exp g)< \infty$,
we have
\begin{align}
\dminf{\supersam{2}}{U_i,W}&=\KL{\cPr{\supersam{2}}{U_i,W }}{\cPr{\supersam{2}}{\tilde{U_i}}\otimes \cPr{\supersam{2}}{\tilde{W}}} \\ 
&\geq \cPr{\supersam{2}}{U_i,W }[ g(W,\supersam{2},U_i)] - \log \cPr{\supersam{2}}{\tilde{U}_i}\otimes \cPr{\supersam{2}}{\tilde{W}}[ \exp(g(\tilde{W},\supersam{2},\tilde{U_i}))]
\label{eq:dv-lemma-bu}
\end{align}
where $\big(W,U_i,\supersam{2}\big)\equaldist \big(\tilde{W},\tilde{U_i},\supersam{2}\big)$ and $\tilde{W} \indep \tilde{U_i} \given \supersam{2}$.
For $i \in \range{n}$, let
\*[
g_i\big(W,\supersam{2},U_i\big)\triangleq \lambda \left(-1\right)^{U_i} \left( \loss\left(Z_{1,i},W\right)-\loss\left(Z_{2,i},W\right)\right).
\]
Hoeffding's lemma implies that
\begin{align}
\cPr{\supersam{2}}{\tilde{U}_i}\otimes \cPr{\supersam{2}}{\tilde{W}}[\exp ( g_i(\tilde{W},\supersam{2},\tilde{U_i}))] \leq \exp \left(\frac{\lambda^2}{2}\right),
\end{align} 
where in the last line we have used $g_i \in [-\lambda,\lambda]$ a.s.
From (\ref{eq:dv-lemma-bu}), we obtain
\begin{align}
\cEE{\supersam{2}} \left(-1\right)^{U_i} \left( \loss\left(Z_{1,i},W\right)-\loss\left(Z_{2,i},W\right)\right) &\leq \inf_{\lambda\geq 0} \frac{\KL{\cPr{\supersam{2}}{U_i,W }}{\cPr{\supersam{2}}{U_i}\otimes \cPr{\supersam{2}}{W}}+\frac{\lambda^2}{2}}{\lambda}\\
&= \sqrt{2 \dminf{\supersam{2}}{W;U_i} } .
\end{align}
Then, averaging over $i$ and taking expectations, 
\[
\EE \left[\Risk{\Dist}{W} - \EmpRisk{S}{W}\right] &= \EE \frac{1}{n}\sum_{i=1}^{n} \cEE{\supersam{2}} \left(-1\right)^{U_i} \left( \loss\left(Z_{1,i},W\right)-\loss\left(Z_{2,i},W\right)\right)\\
&\leq \frac{1}{n}  \sum_{i=1}^{n} \EE \sqrt{2 \dminf{\supersam{2}}{W;U_i} }.
\]
\end{proof}

\begin{proof}[Proof of \cref{thm:gen-kl-m1}]
For any two random measures $P(\supersam{2},U_{J^c},J)$ and $Q(\supersam{2},U)$ on $\parspace$, the Donsker--Varadhan variational formula \citep[Prop.~4.15]{boucheron2013concentration} and the disintegration theorem \citep[Thm.~6.4]{FMP2}, give that
with probability one
\[
\label{eq:dv-kl-kl}
	\KL{Q(\supersam{2},U)}{P(\supersam{2},U_{J^c},J)} = \sup_{g\in \mathcal{G}} \bigg( Q(\supersam{2},U)\left[g\right] - \log P(\supersam{2},U_{J^c},J) \left[\exp g\right]\bigg) 
\]
where $\mathcal{G} = \set{g :P(\supersam{2},U_{J^c},J)(\exp g)<\infty}$. 

Let $g =  \frac{\lambda}{m} \sum_{j\in J} \left(-1\right)^{U_j} \left( \loss\left(Z_{1,j},W\right)-\loss\left(Z_{2,j},W\right)\right) $. First, note that
\*[
\cEE{\supersam{2},U_{J^c},J}{\left[ \frac{\lambda}{m} \sum_{j\in J} \left(-1\right)^{U_j} \left( \loss\left(Z_{1,j},W\right)-\loss\left(Z_{2,j},W\right)\right)\right]}=0.
\]
This is because $\{U_{j}\}_{j\in J}$ are independent of $\supersam{2}$,$U_{J^c},$ and $J$. Moreover, $g$ is $\left[-\lambda,\lambda\right]$-bounded. Therefore, we can use the Hoeffding's lemma to obtain
\*[
\log P(\supersam{2},U_{J^c},J) \left(\exp g\right) \leq \frac{\lambda^2}{2}.
\]
Hence, from \cref{eq:dv-kl-kl}, we conclude that 
\*[
&Q(\supersam{2},U)\left[\frac{1}{m} \sum_{j\in J} \left(-1\right)^{U_j} \left( \loss\left(Z_{1,j},W\right)-\loss\left(Z_{2,j},W\right)\right)\right]\\
&\leq \inf_{\lambda >0} \frac{\KL{Q(\supersam{2},U,J)}{P(\supersam{2},U_{J^c},J)} }{\lambda}  +\frac{\lambda}{2} =  \sqrt{2 \KL{Q(\supersam{2},U)}{P(\supersam{2},U_{J^c},J)}}
\]
almost surely.
Finally, since $J \indep \left(\supersam{2},U\right)$ we get
\*[
&Q(\supersam{2},U)\left[\frac{1}{m} \sum_{j\in J} \left(-1\right)^{U_j} \left( \loss\left(Z_{1,j},W\right)-\loss\left(Z_{2,j},W\right)\right)\right]\\
&= Q(\supersam{2},U) \left[\frac{1}{n} \sum_{i= 1}^{n} \left(-1\right)^{U_j} \left( \loss\left(Z_{1,i},W\right)-\loss\left(Z_{2,i},W\right)\right)\right] \\
&= \EE\sbra{ \Risk{\Dist}{W} - \EmpRisk{S}{W} }
\]
The desired result follows.
\end{proof}

\section{Proofs of \cref{sec:ld-gen-bound}}
\label{apx:proof-ld}
\begin{proof}[Proof of \cref{thm:gen_bound_ld}]
Considering the generalization bound in \cref{thm:gen-kl-m1} and \cref{lemma:variation-mi-kl}, we can write
\begin{align}
\EE\sbra{ \Risk{\Dist}{W} - \EmpRisk{S}{W} }&\leq \EE \sqrt{2\KL{Q_T\left(S\right)}{P_T\left(\tilde{Z}^{2},U_{J^c},J\right)}}\nonumber\\
&\leq \EE \sqrt{  \sum_{t=1}^{T} 2 \cEE{\supersam{2},U,J}\KL{\conditional{Q}}{\conditional{P}}}. \label{eq:kl_varitional}
\end{align}
First, note that from \cref{eq:ldup} it follows that $$\conditional{Q}=\Normal(\mu_{\conditional{Q}}, \frac{2\eta_t}{\beta_t} \id{d}),$$ where the mean is given by 
\*[
&\mu_{\conditional{Q}}=\\
&  \ww_{t-1} - \eta_{t-1} \frac{n-1}{n}\bgrad{S_{J^c}}{\ww_{t-1}}-\frac{\eta_{t-1}}{n}\left(\indic{U_J=1} \ssgrad\left(Z_{1,J},\ww_{t-1}\right) + \indic{U_J=2}  \ssgrad\left(Z_{2,J},\ww_{t-1}\right) \right).
\]
Next, we propose the following construction of $\conditional{P}$. Note that $\conditional{P}$  is $\mathcal{F}_t$-measurable random probability measure where 
$$\mathcal{F}_t=\sigma(S_{J^c},Z_{1,J},Z_{2,J},J,\ww_{0:t-1}).$$ 
Hence we can exploit the information in the trajectory up to time $t$ to construct $\conditional{P}$. In particular, we use the information in $\mathcal{F}_t$ to perform a binary hypothesis testing in which the two hypotheses are defined as 
\begin{align*}
\hypt{1}&: U_J=1, \\
\hypt{2}&: U_J=2.
\end{align*}
Equivalently, $\hypt{1}$ and $\hypt{2}$ can also be described as the hypotheses that $Z_{1,J}$ is a member of the training set and $Z_{2,J}$ is a member of the training set, respectively.  Denote $\pi_t=\left(\pi_{t,1},\pi_{t,2}\right)$  as a probability vector whose $i-$th element shows the belief of the prior at time $t$ that the true hypothesis is $\hypt{i}$ for $i \in \{1,2\}$. Then, we consider the prior as
\[
\conditional{P}=\Normal(\mu_{\conditional{P}}, \frac{2\eta_{t-1}}{\beta_{t-1}} \id{d}),
\]
 where 
\[
\mu_{P_{t|}}=\ww_{t-1} - \eta_{t-1} \frac{n-1}{n}\bgrad{S_{J^c}}{\ww_{t-1}}-\frac{\eta_{t-1}}{n}\left(\pi_{t,1} \ssgrad\left(Z_{1,J},\ww_{t-1}\right) + \pi_{t,2}\ssgrad\left(Z_{2,J},\ww_{t-1}\right) \right).
\]
Here $\pi_1=(\frac{1}{2},\frac{1}{2})$. Then, we construct the the belief vector $\pi_t$ for $t \geq 2$ using the log-likelihood ratio as 
\[
\label{eq:bivec-general-form}
\pi_t = \Big( \theta \big( \log \frac{ \cPr{\mathcal{F}_t}{\hypt{1}}}{ \cPr{\mathcal{F}_t}{\hypt{2}}} \big),1-\theta \big( \log \frac{ \cPr{\mathcal{F}_t}{\hypt{1}}}{ \cPr{\mathcal{F}_t}{\hypt{2}}} \big) \Big),
\]
where $\theta: \Reals \rightarrow [0,1]$. Also, we might expect that the optimal $\theta$ satisfies $\theta(0)=\frac{1}{2}$, $\lim_{x\to \infty} \theta(x) = 1$, and $\lim_{x\to -\infty} \theta(x) = 0$.

Denote probability density function 
$\cPr{\supersam{2},U_{J^c},\hypt{k},\ww_0}{\ww_{1:t-1}}$ as $f_k\left(\ww_{1:t-1}\right)$ for $k \in \{1,2\}$. Due to Markov structure of the update rule in \cref{eq:ldup}, we have 
\begin{align}
&f_k\left(\ww_{1:t-1}\right)= \nonumber\\
&  \prod_{i=1}^{t-1} \left(\frac{\beta_{i-1}}{4 \pi \eta_{i-1}}\right)^{\frac{d}{2}} \exp\left(-\frac{ \beta_{i-1} \| \ww_{i}-\ww_{i-1}+\eta_{i-1}\frac{n-1}{n}\bgrad{S_{J^c}}{\ww_{i-1}}+\frac{\eta_{i-1}}{n}\ssgrad\left(Z_{k,J},\ww_{i-1}\right)  \|^2 }{4\eta_{i-1}}\right).
\label{eq:markov_prop_ld}
\end{align}
Here, \cref{eq:markov_prop_ld} is obtained by the Markov property of the update rule in \cref{eq:ldup}. Then, since the prior distribution on $\hypt{1}$ and $\hypt{2}$ is uniform,  we have

\begin{align}
\log\frac{ \cPr{\mathcal{F}_t}{\hypt{1}}}{ \cPr{\mathcal{F}_t}{\hypt{2}}} &= \log\frac{f_{1}(\ww_{1:t-1})}{f_2(\ww_{1:t-1})}\\
&=Y_{t,2}-Y_{t,1},
\end{align}
where $Y_{t,1}$ and $Y_{t,2}$ are given by
\begin{equation}
\begin{aligned}
Y_{t,1} &\triangleq \sum _{i=1}^{t-1} \frac{\beta_{i-1}}{4\eta_{i-1}} \| \ww_{i}-\ww_{i-1}+\eta_{i-1}\frac{n-1}{n}\bgrad{S_{J^c}}{\ww_{i-1}}+\frac{\eta_{i-1}}{n}\ssgrad\left(Z_{1,J},\ww_{i-1}\right)  \|^2, \\
Y_{t,2} &\triangleq \sum _{i=1}^{t-1} \frac{\beta_{i-1}}{4\eta_{i-1}}  \| \ww_{i}-\ww_{i-1}+\eta_{i-1}\frac{n-1}{n}\bgrad{S_{J^c}}{\ww_{i-1}}+\frac{\eta_{i-1}}{n}\ssgrad\left(Z_{2,J},\ww_{i-1}\right)  \|^2.
\label{eq:y1-y2}
\end{aligned}
\end{equation}
Therefore, the belief vector is given by 
\[
\pi_t = \Big(\theta\big(  (Y_{t,2}-Y_{t,1})\big), 1-\theta\big( (Y_{t,2}-Y_{t,1})\big)\Big),
\]
where $Y_{0,1}=Y_{0,2}=0$ and for $t\geq 2$, $Y_{t,1}$ and $Y_{t,2}$ are given by \cref{eq:y1-y2}. To conclude the proof, we obtain
\begin{align}
& \KL{Q_T\left(S\right)}{P_T\big(\supersam{2},U_{J^c},J\big)} \leq \sum_{t=1}^{T} \cEE{\supersam{2},U,J}{\KL{Q_{t\vert}}{P_{t\vert}}}\\
&=\sum_{t=1}^{T} \cEE{\supersam{2},U,J}{\frac{ \beta_{t-1} \eta_{t-1} \| (\indic{U_J=1}-\pi_{t,1}) \ssgrad\left(Z_{1,J},W_{t-1}\right)+(\indic{U_J=2}-\pi_{t,2}) \ssgrad\left(Z_{2,J},W_{t-1}\right) \|^2}{4n^2}}\\
&=\sum_{t=1}^{T} \cEE{\supersam{2},U,J}{ \frac{ \beta_{t-1} \eta_{t-1} (\indic{U_J=1}-\pi_{t,1})^2 \|  \ssgrad\left(Z_{1,J},W_{t-1}\right)-\ssgrad\left(Z_{2,J},W_{t-1}\right) \|^2}{4n^2}}
\label{eq:kl_upper}
\end{align}
Finally, plugging \cref{eq:kl_upper} into \cref{eq:kl_varitional}, we get the desired result in \cref{eq:gen_ld_final}.
\end{proof}

\section{Conditional Han's Inequality}

\label{app:han-lemma}
\begin{lemma}
\label{lem:han-lemma}
Let $(X_1,\hdots,X_n,Y)$ be $n+1$-dimensional random variable where $X_1,\hdots,X_N$ are discrete random variables. Then, 
\*[
\frac{1}{k {n \choose k}} \sum_{T \in \range{n}_k} \entr{X_T \vert Y}
\]
 is decreasing in k.
\end{lemma}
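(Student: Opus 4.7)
The plan is to mimic the classical proof of Han's inequality, carrying $Y$ along throughout by conditioning. Since the chain rule and ``conditioning reduces entropy'' both remain valid when we condition on an additional random variable, every step of the standard argument should go through verbatim.

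The key intermediate step I would establish first is the following pointwise bound: for every subset $T' \subseteq [n]$ of size $k+1$,
\[
\label{eq:han-key}
k\, H(X_{T'} \vert Y) \le \sum_{i \in T'} H(X_{T' \setminus \{i\}} \vert Y).
\]
To prove \cref{eq:han-key}, I would apply the chain rule in two different ways. First, for each $i \in T'$,
\*[
H(X_{T'} \vert Y) = H(X_{T' \setminus \{i\}} \vert Y) + H(X_i \vert X_{T' \setminus \{i\}}, Y),
\]
and summing over $i \in T'$ gives $(k+1) H(X_{T'}\vert Y) = \sum_{i \in T'} H(X_{T' \setminus \{i\}}\vert Y) + \sum_{i\in T'} H(X_i \vert X_{T' \setminus \{i\}}, Y)$. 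Second, fix an arbitrary ordering of $T'$ and write $T'_{<i}$ for the elements of $T'$ preceding $i$; the chain rule gives $H(X_{T'}\vert Y) = \sum_{i \in T'} H(X_i \vert X_{T'_{<i}}, Y)$. Since conditioning reduces entropy, $H(X_i \vert X_{T' \setminus \{i\}}, Y) \le H(X_i \vert X_{T'_{<i}}, Y)$, so $\sum_{i \in T'} H(X_i \vert X_{T' \setminus \{i\}}, Y) \le H(X_{T'}\vert Y)$, and subtracting yields \cref{eq:han-key}.

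Given \cref{eq:han-key}, I would sum over all $T'$ of size $k+1$ and exchange the order of summation on the right. Each subset $T$ of size $k$ arises as $T' \setminus \{i\}$ for exactly $n-k$ choices of $(T', i)$ (namely, $i$ ranges over $[n]\setminus T$ and $T' = T \cup \{i\}$), so
\*[
k \binom{n}{k+1} \cdot \frac{1}{\binom{n}{k+1}} \sum_{\vert T'\vert = k+1} H(X_{T'}\vert Y)
\le (n-k) \sum_{\vert T\vert = k} H(X_T\vert Y).
\]
Dividing both sides by $k(k+1)\binom{n}{k+1} = (k+1)(n-k)\binom{n}{k}$ and rearranging gives
\*[
\frac{1}{(k+1)\binom{n}{k+1}} \sum_{\vert T'\vert = k+1} H(X_{T'}\vert Y)
\le \frac{1}{k \binom{n}{k}} \sum_{\vert T\vert = k} H(X_T\vert Y),
\]
which is exactly the monotonicity claim.

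No step looks particularly delicate; the only thing to be careful about is the counting identity in the summation step, and that conditioning everything on $Y$ preserves both the chain rule and the ``conditioning reduces entropy'' inequality (which it does, since these are pointwise statements about conditional distributions). Consequently the conditional Han's inequality follows from the unconditional one essentially by adding $Y$ to every conditioning event.
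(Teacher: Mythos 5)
Your proof is correct, but it follows a somewhat different route than the paper's. You establish the classical Han key inequality conditionally on $Y$ for each superset: for every $T'$ with $\lvert T'\rvert = k+1$, $k\,\entr{X_{T'}\vert Y} \le \sum_{i\in T'} \entr{X_{T'\setminus\{i\}}\vert Y}$ (via the chain rule applied two ways plus ``conditioning reduces entropy''), and then sum over $T'$ with a double-counting argument to compare the normalized averages at sizes $k$ and $k+1$ directly. The paper instead proves a four-term submodularity-type inequality, $\entr{X_{1:k+1}\vert Y}+\entr{X_{1:k-1}\vert Y}\le \entr{X_{1:k}\vert Y}+\entr{X_{1,\dots,k-1,k+1}\vert Y}$, averages it over all permutations of $\range{n}$ to get concavity of $k\mapsto \overline H_k = \frac{1}{\binom{n}{k}}\sum_{\lvert T\rvert=k}\entr{X_T\vert Y}$, and then deduces monotonicity of $\overline H_k/k$; that last deduction needs the base case $\overline H_0=0$ and is only sketched there (indeed the paper's displayed intermediate inequality has its direction written backwards relative to what the permutation-averaging actually produces). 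Your route reaches the ratio monotonicity in one step and sidesteps that passage, at the cost of the combinatorial counting identity; the paper's route additionally yields the stronger concavity statement. One small slip in your write-up: the identity you invoke should read $k(k+1)\binom{n}{k+1} = k(n-k)\binom{n}{k}$ (equivalently $(k+1)\binom{n}{k+1}=(n-k)\binom{n}{k}$), not $(k+1)(n-k)\binom{n}{k}$; with the corrected identity, dividing your summed inequality by $k(k+1)\binom{n}{k+1}$ gives exactly the final display you state, so the argument stands.
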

\begin{proof}
For notational convenience, let $\overline{H}_k(X_{\range{n}}\vert Y)=\frac{1}{ {n \choose k}} \sum_{T \in \range{n}_k} \entr{X_T \vert Y}$. Note that if we manage to show that 
\[
\label{eq:main-step-han}
\overline{H}_k(X_{\range{n}}\vert Y)-\overline{H}_{k-1}(X_{\range{n}}\vert Y)\leq \overline{H}_{k+1}(X_{\range{n}}\vert Y)-\overline{H}_{k}(X_{\range{n}}\vert Y),
\]
then the result in \cref{lem:han-lemma} follows. To show \cref{eq:main-step-han}, we can write
\begin{align}
&\entr{X_1,\hdots,X_{k+1}\vert Y} + \entr{X_1,\hdots,X_{k-1}\vert Y} \nonumber\\
=& \entr{X_1,\hdots,X_k\vert Y} + \entr{X_{k+1}\vert X_1,\hdots,X_k, Y} +  \entr{X_1,\hdots,X_{k-1}\vert Y} \\ 
\leq & \entr{X_1,\hdots,X_k\vert Y} + \entr{ X_{k+1} \vert X_1,\hdots,X_{k-1},Y} +\entr{X_1,\hdots,X_{k-1}\vert Y} \label{eq:han-cond} \\
=&\entr{X_1,\hdots,X_k\vert Y} + \entr{X_1,\hdots,X_{k-1},X_{k+1}\vert Y}.
\end{align}
Here in \cref{eq:han-cond}, we drop $X_k$ from the condition in the second term. Therefore, we have
\[
\label{eq:han-before-end}
&\entr{X_1,\hdots,X_{k+1}\vert Y} + \entr{X_1,\hdots,X_{k-1}\vert Y} \leq \entr{X_1,\hdots,X_k\vert Y} + \entr{X_1,\hdots,X_{k-1},X_{k+1}\vert Y}.
\] 
Then, by averaging \cref{eq:han-before-end} over all $n!$ permutation of $\{1,\hdots,n\}$ , we get the desired result in \cref{eq:main-step-han}.
\end{proof}
\section{Details of Experiments}
\label{app:experminet-details}
In this section, we discuss the details behind the experiments as well as the details of minimizing the generalization bound in \cref{thm:gen_bound_ld}.
\subsection{Network architectures and learning curve}
\cref{tbl:hyp-mnist-mlp,tbl:hyp-mnist-cnn,tbl:hyp-fmnist,tbl:hyp-cnn} summarize the hyper-parameters we used for the experiments. Also, in \cref{fig:learning-curves} we plot the learning curves for the experiments reported in \cref{subsec:empirical-results}.

\begin{table}[!h]
\small
\centering
\begin{tabular}{|c|c|}
\hline
Dataset & MNIST \\ \hline
Architecture & MLP(784-500-500-10) \\ \hline
$\eta_t$       & $0.06 \times (0.95)^{\lceil \frac{t}{50} \rceil}$      \\ \hline
$\frac{2\eta_t}{\beta_t}$     &    $10^{-8} + (3\times 10^{-6}-10^{-8})\times \exp(-0.5 \lceil \frac{t}{50} \rceil)$      \\ \hline
Number of iterations     & $900$       \\ \hline
Final training error & $4.33 \pm 0.01\% $     \\ \hline
Generalization error     &  $0.88 \pm 0.01 \%$        \\ \hline
Number of training examples   &  $20000$   \\ \hline
Number of runs     & $100$      \\ \hline
\end{tabular}

\caption{Details of Experiments reported for MNIST with MLP}
\label{tbl:hyp-mnist-mlp}
\end{table}

\begin{table}[!h]
\small
\centering
\begin{tabular}{|c|c|}
\hline
Dataset & MNIST \\ \hline
Architecture &  \shortstack{ CL($5\times5 (32)$)-MaxPool($2\times 2)$-CL($5\times5 (64)$)\\ MaxPool($2\times 2)$-FC(128)-FC(10)}\\ \hline
$\eta_t$       & $0.05 \times (0.90)^{\lceil \frac{t}{40} \rceil}$      \\ \hline
$\frac{2\eta_t}{\beta_t}$     &    $10^{-8} + ( 10^{-5}-10^{-8})\times \exp(-0.5 \lceil \frac{t}{40} \rceil)$      \\ \hline
Number of iterations     & $700$       \\ \hline
Final training error & $2.59 \pm 0.01\%$     \\ \hline
Generalization error     &  $0.55 \pm 0.01\% $       \\ \hline
Number of training examples   &  $20000$   \\ \hline
Number of runs     & $100$      \\ \hline
\end{tabular}

\caption{Details of Experiments reported for MNIST with CNN}
\label{tbl:hyp-mnist-cnn}
\end{table}

\begin{table}[!h]
\small
\centering
\begin{tabular}{|c|c|}
\hline
Dataset & Fashion-FMNIST \\ \hline
Architecture & \shortstack{CL($5\times5 (32)$)-MaxPool($2\times 2)$-CL($5\times5 (64)$)\\MaxPool($2\times 2)$-FC(200)-FC(10)}  \\ \hline
$\eta_t$       & $0.07 \times (0.95)^{\lceil \frac{t}{50} \rceil}$      \\ \hline
$\frac{2\eta_t}{\beta_t}$     &    $5\times 10^{-8} + ( 7\times 10^{-6}-5\times 10^{-8})\times \exp(-0.3 \lceil \frac{t}{50} \rceil)$      \\ \hline
Number of iterations     & $1300$       \\ \hline
Final training error & $7.96 \pm 0.03\%$    \\ \hline
Generalization error     &   $3.71 \pm 0.03\%$       \\ \hline
Number of training examples   &  $20000$   \\ \hline
Number of runs     & $100$      \\ \hline
\end{tabular}

\caption{Details of Experiments reported for Fashion-MNIST with CNN}
\label{tbl:hyp-fmnist}
\end{table}

\begin{table}[!h]
\small
\centering
\begin{tabular}{|c|c|}
\hline
Dataset & CIFAR10 \\ \hline
Architecture &     \shortstack{CL($3\times3 (32)$)-MaxPool($2\times 2)$-CL($3\times3 (64)$) \\ CL($3\times3 (32)$)-MaxPool($2\times 2)$-FC(128)-FC(10)}  \\ \hline
$\eta_t$       & $0.15 \times (0.98)^{\lceil \frac{t}{50} \rceil}$      \\ \hline
$\frac{2\eta_t}{\beta_t}$     &    $10^{-9} + ( 3\times 10^{-5}-10^{-9})\times \exp(-0.3 \lceil \frac{t}{50} \rceil)$      \\ \hline
Number of iterations     & $2300$       \\ \hline
Final training error & $9.39 \pm 0.46\%$    \\ \hline
Generalization error     &   $32.89 \pm 0.44\%$       \\ \hline
Number of training examples   &  $15000$   \\ \hline
Number of runs     & $100$      \\ \hline
\end{tabular}

\caption{Details of Experiments reported for CIFAR10 with CNN}
\label{tbl:hyp-cnn}
\end{table}

\begin{figure}
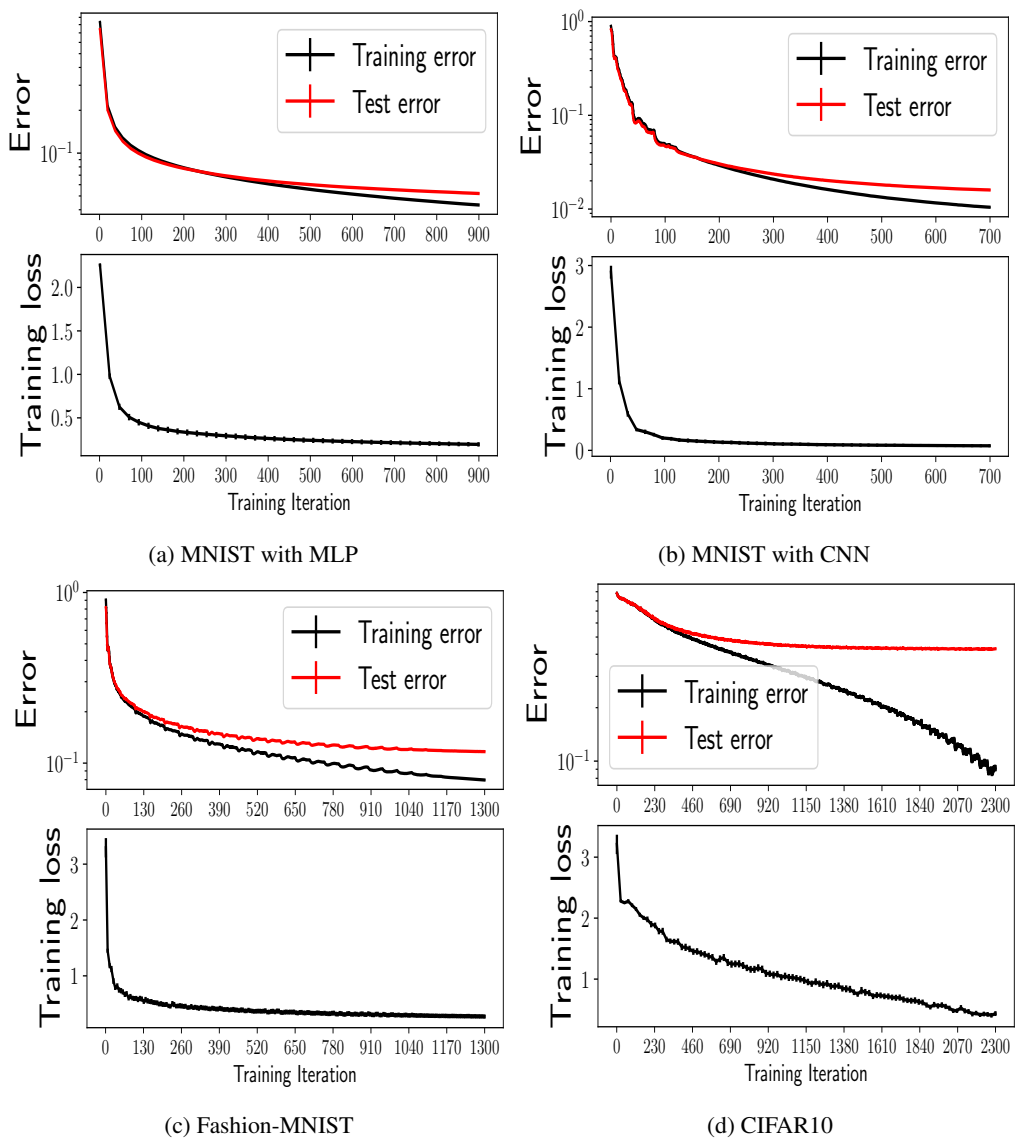

\centering
\begin{subfigure}[b]{.48\textwidth}
  \centering
  \includegraphics[width=1\linewidth, height=7cm]{plots/lc-mnist-mlp.pdf}
  \caption{MNIST with MLP}
\end{subfigure}
\begin{subfigure}[b]{.48\textwidth}
  \centering
  \includegraphics[width=1\linewidth, height=7cm]{plots/lc-mnist-cnn.pdf}
	\caption{MNIST with CNN}
\end{subfigure}
\newline
\begin{subfigure}[b]{.48\textwidth}
  \centering
  \includegraphics[width=1\linewidth, height=7cm]{plots/lc-fmnist.pdf}
	\caption{Fashion-MNIST}
\end{subfigure}
\begin{subfigure}[b]{.48\textwidth}
  \centering
  \includegraphics[width=1\linewidth, height=7cm]{plots/lc-cifar.pdf}
	\caption{CIFAR10}
\end{subfigure}%
\caption{Learning curves. These plots show the training error, error on the test set, and the training loss. The loss functions is cross-entropy. Note y-axes for the error plots are log-scale. }
\label{fig:learning-curves}
\end{figure}

\subsection{Optimizing the bound over the choice of $\theta$ function}
Our generalization bound in \cref{thm:gen_bound_ld} consists of an infimum  over the functions in $\Theta$. To study the impact of infimum, we consider the family of functions $\Theta$ given by
\*[
\Theta = \{\theta_{a}(x)| \exists  a>0 \ \text{such that} \ \theta_{a}(x)=\frac{1}{2}(1+\mathrm{erf}(\frac{x}{a})) \ \text{or} \  \theta_{a}(x)=\frac{1}{2}(1+\mathrm{tanh}(\frac{x}{a})) \}.
\]
Then, we divide the samples of the optimization trajectory into two sets of equal size: training set and the test set. Then, we optimize over $a$ to find the $\theta_{a^{\star}}(x)$ that achieves the minimum expected generalization over the training set. The numbers reported in \cref{tbl:sum-results} are based on the evaluation of $\theta_{a^{\star}}(x)$ over the test set. Thus, the number reported in \cref{tbl:sum-results} are unbiased estimate of the generalization bound in \cref{thm:gen_bound_ld}.
\end{document}